\journal{Artificial Intelligence}
\def\ps@pprintTitle{%
   \let\@oddhead\@empty
   \let\@evenhead\@empty
   \let\@oddfoot\@empty
   \let\@evenfoot\@oddfoot
}
\newtheorem{definition}{Definition}
\newtheorem{theorem}{Theorem}
\newtheorem{lemma}{Lemma}
\newtheorem{claim}{Claim}
\newenvironment{proof}{\noindent{\sf Proof.}}{\hfill $\boxtimes\hspace{2mm}$\linebreak}
\renewcommand{\phi}{\varphi}
\renewcommand{\epsilon}{\varepsilon}
\newcommand{\cF}{\overline{F}}
\newenvironment{proof-of-claim}{\noindent{\sc Proof of Claim.}}{\hfill $\boxtimes\hspace{2mm}$\linebreak}
\renewcommand{\phi}{\varphi}
\renewcommand{\epsilon}{\varepsilon}
\newsavebox{\diamonddotsavebox}
\sbox{\diamonddotsavebox}{$\Diamond$\hspace{-1.8mm}\raisebox{0.3mm}{$\cdot$}\hspace{1mm}}
\begin{document}
\begin{frontmatter}
\title{Strategic Coalitions in Stochastic Games}

\author{Pavel Naumov}
\address{Claremont McKenna College, Claremont, California, USA}
\ead{pgn2@cornell.edu}
\author{Kevin Ros}
\address{Vassar College, Poughkeepsie, New York, USA}
\ead{kevinros@vassar.edu}






\begin{abstract}
The article introduces a notion of a stochastic game with failure states and proposes two logical systems with  modality ``coalition has a strategy to transition to a non-failure state with a given probability while achieving a given goal." The logical properties of this modality depend on whether the modal language allows the empty coalition. The main technical results are a completeness theorem for a logical system with the empty coalition, a strong completeness theorem for the logical system without the empty coalition, and an incompleteness theorem which shows that there is no strongly complete logical system in the language with the empty coalition. 
\end{abstract}

\end{frontmatter}


\section{Introduction}

In this article we study coalition power in stochastic games. An example of such a game is the road situation depicted in Figure~\ref{intro-example figure}. In this situation, self-driving car $a$ is trying to pass self-driving car $b$. Unexpectedly, a truck moving in the opposite direction appears on the road. For the sake of simplicity, we assume that cars $a$ and $b$ have only three strategies: slow-down ($-$), maintain the current speed ($0$), and accelerate ($+$). We also assume that the truck is too heavy to significantly change the speed before a possible collision. If cars $a$ and $b$ cooperate, there are two sensible things that they can do: (i) car $b$ can accelerate letting car $a$ to slow down and to return to the position behind car $b$; (ii) car $b$ can slow down letting car $a$ to accelerate and to pass before it reaches the truck. 

\begin{figure}[ht]
\begin{center}
\scalebox{0.95}{\includegraphics{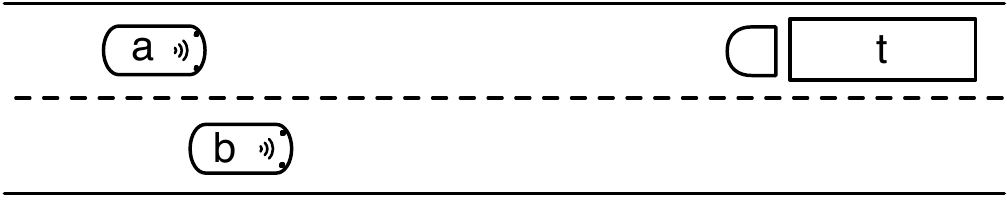}}
\caption{A Road Traffic Situation.}\label{intro-example figure}
\end{center}
\end{figure}

The diagram in Figure~\ref{intro-example game figure} describes probabilities of different outcomes of all possible combinations of actions of cars $a$ and $b$. This diagram has five states: state $p$ is the current (``passing") state of the system. States $ab$ and $ba$ represent outcomes in which car $a$ ends up, respectively, behind and ahead of car $b$. States $f_c$ and $f_t$ are ``failure'' states: in the first of them there is a collision between cars, in the second car $a$ collides with the truck.   The actual probabilities of possible outcomes for any give combination of actions are captured by the labeled directed edges. For example, the directed edge from state $p$ to state $ab$ labeled with $-+/1.0$ means that in the case (i) above, when car $a$ slows down ($-$) and car $b$ accelerates ($+$), the system safely transitions into state $ab$ with probability $1.0$. This means that coalition $\{a,b\}$ has a strategy that avoids collision with probability $1.0$. We write this as
$$
[a,b]_{1.0}(\mbox{``Collision is avoided''}).
$$
At the same time, directed edge from state $p$ to state $ba$ is labeled with $+-/0.9$. Hence, in the case (ii) above, the car $a$ will be able to pass car $b$ without collision with probability $0.9$:
$$
[a,b]_{0.9}(\mbox{``Pass without collision''}).
$$
The label $+-,0-/0.1$ on the directed edge from state $p$ to failure state $f_t$ denotes the fact that if car $a$ either accelerates ($+$) or maintains the same speed ($0$), while car $b$ slows down ($-$), then car $a$ will collide with the track with probability $0.1$. 

Note that car $a$ alone does not have a strategy to pass without collision with probability $0.9$. Indeed, if car $a$ decides to accelerate ($+$), then depending on if car $b$ slows down ($-$), maintains the current speed ($0$), or accelerates ($+$), the probability of passing without collision will be $0.9$, $0.6$, and $0.0$. Thus, although car $a$, of course, has a strategy to pass without collision with probability $0.0$:
$$
[a]_{0.0}(\mbox{``Pass without collision''}),
$$
it does not have a strategy to pass that would guarantee survival with any positive probability $\epsilon>0$: 
$$
\neg[a]_{\epsilon}(\mbox{``Pass without collision''}).
$$

\begin{figure}[ht]
\begin{center}
\scalebox{0.75}{\includegraphics{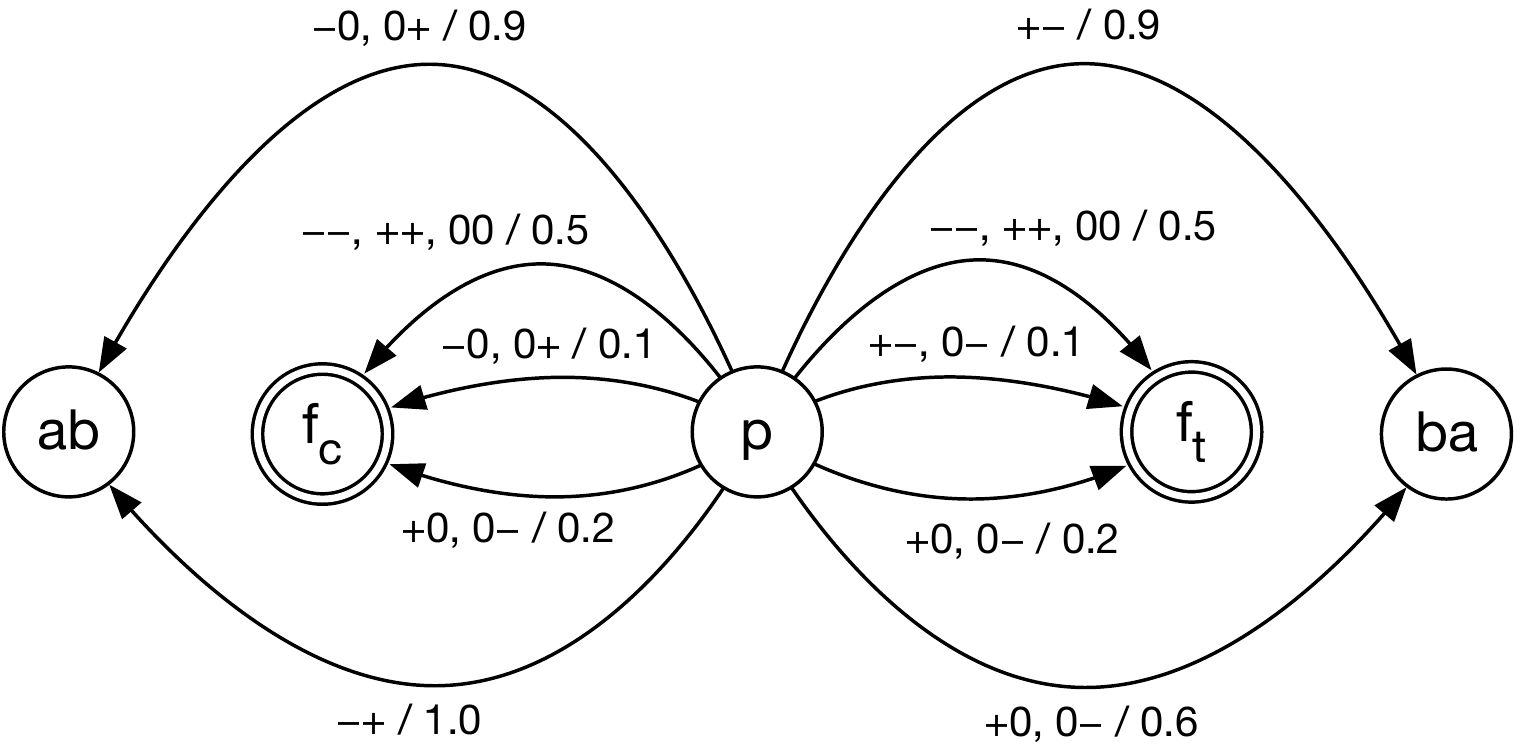}}
\caption{A Stochastic Game.}\label{intro-example game figure}
\end{center}
\end{figure}

In this article we study properties of modality $[C]_p\phi$ that stands for ``coalition $C$ has a strategy that achieves $\phi$ in all non-failure states and is guaranteed to avoid failure states with probability at least $p$''.
If $p=1.0$, then this is essentially coalition power modality introduced by Marc Pauly~\cite{p01illc,p02}. Pauly proved the completeness of the basic logic of coalition power. 
His approach has been widely studied in the literature~\cite{g01tark,vw05ai,b07ijcai,sgvw06aamas,av08aamas,abvs10jal,avw09ai,b14sr,ge18aamas,al18aamas,ga17tark,alnr11jlc}. 

Alur, Henzinger, and Kupferman introduced Alternating-Time Temporal Logic (ATL) that combines temporal and coalition modalities~\cite{ahk02}.
Goranko  and  van  Drimmelen~\cite{gd06tcs} gave a complete axiomatization of ATL. Decidability and model checking problems for ATL-like systems has also been widely studied~\cite{amrz16kr,bmmrv17lics,bmm17aamas}.
Chen and Lu added the probability of achieving a goal to ATL and developed model checking algorithms for the proposed system~\cite{cl07fskd}. 
Another version of ATL with probabilistic success was proposed by Bulling and Jamroga~\cite{bj09fi}. They considered modality $\langle\langle A\rangle\rangle^p_\omega\phi$ that stands for ``coalition $C$ can bring about $\phi$ with success level of at least $p$ when the opponents behave according to $\omega$'' and investigated its model checking properties. Unlike our approach in the current paper, neither of these works distinguish failure states from non-failure states. The probability of success in their systems is the probability of achieving $\phi$, not the probability of avoiding a failure state. Nov{\'a}k and Jamroga~\cite[Definition 2.5]{nj11ijcai} defined probability of  a ``successful execution" of an action as probability of an action to achieve its expected effect. They called failure any execution when an action does not achieve the expected (``annotated'') effect. This, in essence, is the probability studied in our article. Since we consider a multiagent setting, we find it more intuitive to talk about failure states rather than failure of individual actions or action profiles. Later in the paper~\cite{nj11ijcai}, however, Nov{\'a}k and Jamroga introduced modality $[\tau]_p\phi$ that refers to probability of an agent program $\tau$ to achieve goal $\phi$ and not a probability of individual actions to fail. Huang, Su, and Zhang combined perfect recall and coalition power to achieve a goal with a certain probability. They discussed model checking properties of their logical system~\cite{hsz12aaai}. Coalition power to achieve a goal with a certain probability is also used in PRISM-games, a model checker for stochastic multi-player games~\cite{cfkps13tools,kpw18ijsttt}. None of these works on  probabilistic extensions of ATL contain completeness results. 

Alternative approaches to  expressing the power to achieve a goal in a temporal setting are the STIT logic~\cite{bp90krdr,h01,h95jpl,hp17rsl,ow16sl} and Strategy Logic~\cite{chp10ic,mmpv14tocl,bmmrv17lics,ammr18ic}. Broersen, Herzig, and Troquard have shown that coalition logic can be embedded into a variation of STIT logic~\cite{bht07tark}. We are not aware of any probabilistic versions of either STIT or Strategy Logic.

\section{Outline of the Contribution}

In this article we axiomatize the properties of modality $[C]_p$ in stochastic game. It turns out that the axiomatization results depend significantly on whether the language allows the empty coalition or not. If the empty coalition is allowed, then one can use it to write formula $[\varnothing]_p\top$, which means that the system will unavoidably survive with probability $p$. Similarly, $[\varnothing]_p\phi$ means that the system will unavoidably survive with probability $p$ and statement $\phi$ will be true in the next state. Unavoidability cannot be expressed in the language without the empty coalition. In this article we introduce two different logical systems for modality $[C]_p$. The first of these systems, $\mathcal{L}$, allows coalition $C$ to be empty and the second, $\mathcal{L}^+$, does not. We describe the syntax and semantics of $\mathcal{L}$ and $\mathcal{L}^+$ in Section~\ref{syntax and semantics section}. We introduce the axioms and the inference rules for the logical systems in Section~\ref{logical systems section} and prove the soundness of the axioms in Section~\ref{soundness section}. The main technical contributions of this article are the completeness theorem for system $\mathcal{L}$ and the strong completeness theorem for system $\mathcal{L}^+$, which we prove in Section~\ref{completeness section}. Additionally, in Section~\ref{incompleteness section} we prove that no strongly sound logical system is strongly complete in the language with the empty coalition. In Section~\ref{decidability section} we discuss the decidability of the systems. Finally, we conclude the article in Section~\ref{conclusion section}.

\section{Syntax and Semantics}\label{syntax and semantics section}

In the current section we introduce the formal syntax and the formal semantics for logical systems $\mathcal{L}$ and $\mathcal{L}^+$. The language $\Phi$ of the first of these systems allows set $C$ to be empty and language $\Phi^+$ of the second does not. In both cases, we assume a fixed {\em finite} set of agents $A$ and a fixed set of propositional variables. Additionally, a {\em coalition} is any subset of $A$.  

\begin{definition}\label{Phi}
Let $\Phi$ be the minimal set of formulae such that
\begin{enumerate}
    \item $v\in \Phi$ for each propositional variable $v$,
    \item $\neg\phi, \phi\to\psi\in\Phi$ for all formulae $\phi,\psi\in\Phi$,
    \item $[C]_p\phi\in\Phi$ for each coalition $C$, each real number $p$ such that $0\le p\le 1$, and each formula $\phi\in \Phi$.
\end{enumerate}
\end{definition}
In other words, $\Phi$ is the language specified by the following grammar
$$
\phi := v \;|\; \neg\phi\;|\;\phi\to\phi\;|\;[C]_p\phi.$$ 
We assume that Boolean constants $\top$ and $\bot$ are defined in our languages in the standard way.  By $\Phi^+$ we denote the subset of $\Phi$ that contains all formulae in $\Phi$ that do not use empty coalitions. In other words, language $\Phi^+$ could be defined as in Definition~\ref{Phi} but with an additional assumption that coalition $C$ is not empty.

Let $X^Y$ be the set of all functions from set $Y$ to set $X$. 
\begin{definition}\label{transition system}
A tuple $(S,F,D,P,\pi)$ is a stochastic game, if
\begin{enumerate}
\item $S$ is a set (of states),
\item $F \subseteq S$ is a set (of failure states),
\item $D$ is a nonempty set (domain of actions),
\item $P$ is a function from set  $S\times D^A\times S$ into set $[0,1]$ such that $$\sum_{s'\in S}P(s,\delta,s') = 1$$ for each state $s\in S$ and each function $\delta\in D^A$,
\item $\pi$ is a function from propositional variables into subsets of $S$.
\end{enumerate}
\end{definition}
By $\cF$ we denote the complement $S\setminus F$ of the set $F$. A function from set $D^A$ is called a {\em complete action profile}. 

In the introductory example depicted in Figure~\ref{intro-example game figure}, the set of agents $A$ consists of car $a$ and car $b$. The set of states $S$ is $\{ab, f_c, p, f_t, ba\}$ and the set of failure states $F$ is $\{f_c, f_t\}$. The domain of actions $D$ is $\{-,0,+\}$. Although formally a complete action profile $\delta$ is a function from set of all agents $A$ to the domain of actions $D$, in the case of our introductory example it is more convenient to refer to such profiles by pairs $\alpha\beta$, where $\alpha=\delta(a)$ and $\beta=\delta(b)$. The function $P$ is specified by labels on the directed edges in the diagram. We use commas to denote multiple functions with the same probability. For example, the label  ``$-0,0+ / 0.9$'' on the directed edge from state $p$ to state $ab$ means that $P(p,-0,ab) = 0.9$ and $P(p, 0+, ab) = 0.9$.


Next is the key definition of this article. Its item 4 formally specifies the semantics of the modality $[C]_p$. In this definition we use term {\em action profile of a coalition} to refer to a function $\delta$ that assigns an action $\delta(a)$ to each agent $a$ of a coalition $C$. Also, note that for any two relations $R_1,R_2\subseteq X\times Y$, we have $R_1\subseteq R_2$ if every pair $(x,y)\in X\times Y$ in relation $R_1$ is also in relation $R_2$. If $f$ and $g$ are partial functions (functional relations), then $f\subseteq g$ means that function $g$ is an extension of function $f$.

\begin{definition}\label{sat}
For any state $s\in \cF$ of a stochastic game $(S,F,D,P,\pi)$ and any formula $\phi\in\Phi$, the satisfiability relation $s\Vdash\phi$ is defined recursively as follows:
\begin{enumerate}
    \item $s\Vdash v$ if $s\in \pi(v)$, for any propositional variable $v$,
    \item $s\Vdash \neg\phi$ if $s\nVdash\phi$,
    \item $s\Vdash \phi\to\psi$ if $s\nVdash\phi$ or $s\Vdash\psi$,
    \item $s\Vdash [C]_p\phi$ when
    there is an action profile $\delta\in D^C$ of coalition $C$ such that for any complete action profile $\delta'\in D^A$ if $\delta\subseteq \delta'$, then 
    \begin{enumerate}
        \item $\sum_{t\in \cF}P(s,\delta',t)\ge p$,
        \item if $P(s,\delta',t)>0$, then $t\Vdash \phi$, for each $t\in \cF$.
    \end{enumerate}
    

\end{enumerate}
\end{definition}

\section{Logical Systems}\label{logical systems section}

In this section we introduce the axioms and the inference rules of logical systems $\mathcal{L}$ and $\mathcal{L}^+$ in languages $\Phi$ and $\Phi^+$ respectively. In addition to propositional tautologies in the corresponding language, each system contains the following axioms:
\begin{enumerate}
\item Cooperation: $[C_1]_{p}(\phi\to\psi)\to([C_2]_{q}\phi\to[C_1\cup C_2]_{\max\{p,q\}}\psi)$,\\ where $C_1\cap C_2=\varnothing$,
\item Monotonicity: $[C]_{p}\phi\to [C]_q\phi$, where $q\le p$,
\item Unachievability of Falsehood: $\neg [C]_p\bot$, where $p>0$.
\end{enumerate}
The Cooperation axiom in the form without subscripts goes back to Marc Pauly~\cite{p01illc,p02}. Informally, it says that two coalitions can combine their strategies to achieve a common goal. The assumption that coalitions $C_1$ and $C_2$ are disjoint is important because a hypothetical common agent of these two coalitions might be required to choose different actions under strategies of these two coalitions.

Our version of the Cooperation axiom adds probability of non-failure subscript to the original version of this axiom. Perhaps one might think that the conclusion of the axiom should have subscript $\min\{p,q\}$ rather than $\max\{p,q\}$. This is not true because, according to Definition~\ref{sat}, statement $[C]_p\phi$ means that coalition $C$ has a strategy to achieve $\phi$ with probability of non-failure of at least $p$ {\em regardless} of what actions are chosen by the other agents. 

The Monotonicity axiom says that if a coalition $C$ can achieve goal $\phi$ with probability of non-failure of at least $p$, then coalition $C$ can achieve $\phi$ with probability of non-failure of at least $q$, where $q \le p$.

Finally, the Unachievability of Falsehood axiom says that no coalition can achieve falsehood with a positive probability. 

We write $\vdash_{\mathcal{L}}\phi$ if formula $\phi\in \Phi$ is provable from the above axioms using the Modus Ponens and the Necessitation inference rules:
$$
\dfrac{\phi,\phi\to\psi}{\psi}
\hspace{10mm}
\dfrac{\phi}{[C]_0\phi}
.
$$
Notice that the Necessitation inference rule with positive subscript is not, generally speaking, valid. Indeed, formula $\top$ is universally true but coalition $C$ may not have a strategy that guarantees the non-failure of the system with a positive probability. Thus, $[C]_p\top$ is not a universally true formula for $p > 0$. 

Let $\vdash_{\mathcal{L}^+}\phi$ if formula $\phi\in \Phi^+$ is provable (using only formulae in language $\Phi^+)$ from the above axioms using the Modus Ponens, the Necessitation, and the Monotonicity
$$
\dfrac{\phi\to\psi}{[C]_p\phi\to [C]_p\psi}
$$
inference rules. We excluded the Monotonicity rule from system $\mathcal{L}$ because, as we show below, it is derivable in $\mathcal{L}$.

\begin{lemma}
Monotonicity inference rule is derivable in system $\mathcal{L}$.
\end{lemma}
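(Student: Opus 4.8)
The plan is to derive the rule directly from Necessitation together with the Cooperation axiom, exploiting the availability of the empty coalition. Assume $\vdash_{\mathcal{L}}\phi\to\psi$; we must show $\vdash_{\mathcal{L}}[C]_p\phi\to[C]_p\psi$ for an arbitrary coalition $C$ and an arbitrary $p\in[0,1]$.

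First I would apply the Necessitation rule to the hypothesis $\phi\to\psi$, taking the coalition to be $\varnothing$, which yields $\vdash_{\mathcal{L}}[\varnothing]_0(\phi\to\psi)$. Next, instantiate the Cooperation axiom with $C_1:=\varnothing$, $C_2:=C$, the first probability parameter equal to $0$, and the second equal to $p$; the disjointness hypothesis $C_1\cap C_2=\varnothing$ holds trivially. Since $\varnothing\cup C=C$ and $\max\{0,p\}=p$, this instance reads
$$
[\varnothing]_0(\phi\to\psi)\to\big([C]_p\phi\to[C]_p\psi\big).
$$
Finally, Modus Ponens applied to this formula and to $[\varnothing]_0(\phi\to\psi)$ gives $\vdash_{\mathcal{L}}[C]_p\phi\to[C]_p\psi$, as required.

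There is no real obstacle here; the derivation is three lines. The only point worth highlighting is why this argument is specific to $\mathcal{L}$: it relies on being able to form the formula $[\varnothing]_0(\phi\to\psi)$ and then combine it, via Cooperation, with $[C]_p\phi$ for a coalition $C$ disjoint from $\varnothing$. In the language $\Phi^+$ the empty coalition is unavailable, so Necessitation can only produce $[C']_0(\phi\to\psi)$ for a nonempty $C'$, and Cooperation then forces $C'$ to be disjoint from the coalition to which we wish to transfer the implication — which rules out $C'=C$. This is exactly why the Monotonicity rule must be taken as primitive in $\mathcal{L}^+$ but is redundant in $\mathcal{L}$.
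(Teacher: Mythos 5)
Your derivation is correct and is exactly the paper's own proof: Necessitation with the empty coalition gives $[\varnothing]_0(\phi\to\psi)$, and the Cooperation instance with $C_1=\varnothing$, $C_2=C$ yields $[C]_p\phi\to[C]_p\psi$ by Modus Ponens. Your closing remark on why the argument is unavailable in $\Phi^+$ is also accurate and matches the paper's motivation for taking the Monotonicity rule as primitive in $\mathcal{L}^+$.
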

\begin{proof}
Suppose that $\vdash_{\mathcal{L}}\phi\to\psi$. Thus, $\vdash_{\mathcal{L}}[\varnothing]_0(\phi\to\psi)$ by the Necessitation inference rule. Consider now the following instance of the Cooperation axiom:
$
[\varnothing]_0(\phi\to\psi)\to ([C]_p\phi\to [C]_p\psi)
$.
Therefore, $\vdash_{\mathcal{L}}[C]_p\phi\to [C]_p\psi$ by the Modus Ponens inference rule.

\end{proof}

We write $X\vdash_{\mathcal{L}}\phi$ (or $X\vdash_{\mathcal{L}^+}\phi$) if formula $\phi\in\Phi$ (or $\phi\in\Phi^+$)   is provable from the theorems of logical system $\mathcal{L}$ (or $\mathcal{L}^+$) and a set of additional axioms $X$ using only the Modus Ponens inference rule. Note that if set $X$ is empty, then statement $X\vdash_{\mathcal{L}}\phi$ is equivalent to $\vdash_{\mathcal{L}}\phi$ and statement $X\vdash_{\mathcal{L}^+}\phi$ is equivalent to $\vdash_{\mathcal{L}^+}\phi$. We often write $\vdash\phi$ and $X\vdash\phi$ if it is clear from the context which logical system we refer to. We say that set $X$ is consistent if $X\nvdash\bot$. 

\begin{lemma}[deduction]\label{deduction lemma}
For either $\mathcal{L}$ or $\mathcal{L}^+$, if $X,\phi\vdash\psi$, then $X\vdash\phi\to\psi$.
\end{lemma}
\begin{proof}
Suppose that sequence $\psi_1,\dots,\psi_n$ is a proof from set $X\cup\{\phi\}$ and the theorems of our logical system that uses the Modus Ponens inference rule only. In other words, for each $k\le n$, either
\begin{enumerate}
    \item $\vdash\psi_k$, or
    \item $\psi_k\in X$, or
    \item $\psi_k$ is equal to $\phi$, or
    \item there are $i,j<k$ such that formula $\psi_j$ is equal to $\psi_i\to\psi_k$.
\end{enumerate}
It suffices to show that $X,\phi\vdash\psi_k$ for each $k\le n$. We prove this by induction on $k$ through considering the four cases above separately.

\vspace{1mm}
\noindent{\bf Case 1}: $\vdash\psi_k$. Note that $\psi_k\to(\phi\to\psi_k)$ is a propositional tautology, and thus, is an axiom of our logical system. Hence, $\vdash\phi\to\psi_k$ by the Modus Ponens inference rule. Therefore, $X\vdash\phi\to\psi_k$. 

\vspace{1mm}
\noindent{\bf Case 2}: $\psi_k\in X$. Then, $X\vdash\psi_k$.

\vspace{1mm}
\noindent{\bf Case 3}: formula $\psi_k$ is equal to $\phi$. Thus, $\phi\to\psi_k$ is a propositional tautology. Therefore, $X\vdash\phi\to\psi_k$. 

\vspace{1mm}
\noindent{\bf Case 4}:  formula $\psi_j$ is equal to $\psi_i\to\psi_k$ for some $i,j<k$. Thus, by the induction hypothesis, $X\vdash\phi\to\psi_i$ and $X\vdash\phi\to(\psi_i\to\psi_k)$. Note that formula 
$
(\phi\to\psi_i)\to((\phi\to(\psi_i\to\psi_k))\to(\phi\to\psi_k))
$
is a propositional tautology. Therefore, $X\vdash \phi\to\psi_k$ by applying the Modus Ponens inference rule twice.
\end{proof}

Note that it is important for the above proof that $X\vdash\phi$ stands for derivability only using the Modus Ponens inference rule. For example, if the Necessitation inference rule is allowed, then the proof will have to include one more case where $\psi_k$ is formula $[C]_0\psi_i$ for some coalition $C\subseteq A$, and some integer $i< k$. In this case we will need to prove that if $X\vdash \phi\to\psi_i$, then $X\vdash \phi\to[C]_0\psi_i$, which is not true.

\begin{lemma}[Lindenbaum]\label{Lindenbaum's lemma}
For either $\mathcal{L}$ or $\mathcal{L}^+$, any consistent set of formulae can be extended to a maximal consistent set of formulae.
\end{lemma}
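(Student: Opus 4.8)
The plan is to obtain the maximal consistent set by an application of Zorn's Lemma rather than by the more familiar enumeration of formulae, since the language $\Phi$ (and likewise $\Phi^+$) is in general uncountable: the subscript $p$ of $[C]_p\phi$ ranges over the whole interval $[0,1]$, so no enumeration is available. Fix a consistent set $X$ (in the relevant language) and let $\mathcal{P}$ be the collection of all consistent sets $Y$ of formulae with $X\subseteq Y$, partially ordered by inclusion. The set $\mathcal{P}$ is nonempty because $X\in\mathcal{P}$.

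First I would verify that every chain $\mathcal{C}\subseteq\mathcal{P}$ has an upper bound in $\mathcal{P}$, namely its union $\bigcup\mathcal{C}$. Clearly $X\subseteq\bigcup\mathcal{C}$, so the only thing to check is that $\bigcup\mathcal{C}$ is consistent. Suppose it is not; then $\bigcup\mathcal{C}\vdash\bot$, and since $Y\vdash\psi$ means by definition that $\psi$ is derivable from the theorems of the system together with formulae of $Y$ using Modus Ponens only, the derivation of $\bot$ invokes only finitely many formulae $\chi_1,\dots,\chi_m\in\bigcup\mathcal{C}$. As $\mathcal{C}$ is a chain, there is a single $Y_0\in\mathcal{C}$ containing all of $\chi_1,\dots,\chi_m$, whence $Y_0\vdash\bot$, contradicting $Y_0\in\mathcal{P}$. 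Hence $\bigcup\mathcal{C}\in\mathcal{P}$, and Zorn's Lemma yields a maximal element $M$ of $\mathcal{P}$.

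It then remains to check that $M$ is a maximal consistent set in the usual sense, i.e.\ that $M$ is consistent and, for every formula $\phi$, either $\phi\in M$ or $\neg\phi\in M$. Consistency of $M$ holds since $M\in\mathcal{P}$. Now fix $\phi$ and suppose $\phi\notin M$. Then $M\cup\{\phi\}$ properly extends $M$, so by maximality of $M$ it does not belong to $\mathcal{P}$; thus $M,\phi\vdash\bot$, and by the deduction lemma (Lemma~\ref{deduction lemma}) we get $M\vdash\neg\phi$. If in addition $\neg\phi\notin M$, the same reasoning gives $M\vdash\neg\neg\phi$, and then $M\vdash\bot$ follows by propositional reasoning together with Modus Ponens, contradicting the consistency of $M$. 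Therefore $\neg\phi\in M$, which completes the argument.

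The only genuinely delicate point is the verification that the union of a chain of consistent sets is consistent; everything else is routine propositional bookkeeping plus the deduction lemma. I expect no real obstacle beyond taking care that the relation $X\vdash\phi$ here refers to derivations using Modus Ponens only, so that "a derivation uses only finitely many premises" is immediate — which is precisely why this restricted notion of provability is the convenient one to use in this lemma.
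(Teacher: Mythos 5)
Your proof is correct and takes essentially the same route as the paper, which simply defers to the standard Lindenbaum argument while noting that the uncountability of $\Phi$ (due to real-number subscripts) forces reliance on the Axiom of Choice. You have merely spelled out that reliance explicitly via Zorn's Lemma, correctly exploiting the fact that $X\vdash\phi$ is defined using Modus Ponens only, so that derivations are finite and the union of a chain of consistent sets is consistent.
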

\begin{proof}
The standard proof of Lindenbaum's lemma applies here~\cite[Proposition 2.14]{m09}. However, since the formulae in our logical systems use real numbers in subscript, the set of formulae  is uncountable. Thus, the proof of Lindenbaum's lemma in our case relies on the Axiom of Choice.
\end{proof}

We conclude this section by giving an example of a formal derivation in our logical systems. This result is used later in the proof of the completeness. 

\begin{lemma}\label{set monotonicity}
For any coalitions $C,D\subseteq A$, if $C\subseteq D$, then
\begin{enumerate}
    \item $\vdash_{\mathcal{L}} [C]_p\phi \to [D]_p\phi$ for each formula $\phi\in\Phi$,
    \item $\vdash_{\mathcal{L}^+} [C]_p\phi \to [D]_p\phi$ for each formula $\phi\in\Phi^+$, where set $C$ is not empty.
\end{enumerate}
\end{lemma}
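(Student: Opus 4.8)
The plan is to obtain both parts from one carefully chosen instance of the Cooperation axiom, splitting $D$ as the disjoint union $C\cup(D\setminus C)$, which is legitimate since $C\subseteq D$. In effect this is ``monotonicity in the coalition argument'' and should follow by the same kind of trick used in the derivation of the Monotonicity \emph{rule} above.

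For part~1, I would first note that $\phi\to\phi$ is a propositional tautology, hence an axiom of $\mathcal{L}$, so the Necessitation rule gives $\vdash_{\mathcal{L}}[D\setminus C]_0(\phi\to\phi)$. Then I would instantiate the Cooperation axiom with $C_1:=D\setminus C$ and $C_2:=C$ (which are disjoint), taking both the ``$\phi$'' and the ``$\psi$'' of the axiom schema to be the formula $\phi$, and with subscripts $0$ and $p$, obtaining
$$
[D\setminus C]_0(\phi\to\phi)\to\bigl([C]_p\phi\to[(D\setminus C)\cup C]_{\max\{0,p\}}\phi\bigr).
$$
Since $0\le p\le 1$ we have $\max\{0,p\}=p$, and $(D\setminus C)\cup C=D$ because $C\subseteq D$, so Modus Ponens applied to this instance together with the Necessitation fact above yields $\vdash_{\mathcal{L}}[C]_p\phi\to[D]_p\phi$.

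For part~2, the identical derivation works in $\mathcal{L}^+$ as long as $D\setminus C$ is a \emph{nonempty} coalition, i.e.\ as long as $C\subsetneq D$: in that case $C$, $D\setminus C$, and $D$ are all nonempty, so every formula appearing stays in $\Phi^+$, and in particular the use of Necessitation is legal since it is applied to the nonempty coalition $D\setminus C$. It then only remains to treat the case $C=D$, but there $[C]_p\phi\to[D]_p\phi$ is literally $[C]_p\phi\to[C]_p\phi$, a propositional tautology and hence an axiom.

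I do not expect a genuine obstacle here; the only point requiring care is precisely the empty-coalition restriction in $\mathcal{L}^+$, which is why part~2 needs the trivial case $C=D$ peeled off and why one must observe that $D\setminus C\neq\varnothing$ exactly when $C\subsetneq D$.
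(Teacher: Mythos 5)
Your proof is correct and follows essentially the same route as the paper's: necessitate $\phi\to\phi$ over $D\setminus C$ and apply the Cooperation axiom with $C_1=D\setminus C$, $C_2=C$, handling $C=D$ separately where the empty-coalition restriction would bite. The only cosmetic difference is that the paper peels off the case $C=D$ in both parts, whereas you correctly observe that for $\mathcal{L}$ the uniform derivation already covers it.
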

\begin{proof}
We give a common proof for both parts of the lemma.
If $C=D$, then $\vdash [C]_p\phi \to [D]_p\phi$ because formula $[C]_p\phi \to [D]_p\phi$ is a propositional tautology. 

Suppose now that $C\subsetneq D$. Thus set $D\setminus C$ is not empty.
Note that $\phi\to\phi$ is a propositional tautology. Thus, $\vdash[D\setminus C]_0(\phi\to\phi)$ by the Necessitation inference rule. At the same time, because $(D\setminus C) \cap C = \varnothing$, the following formula is an instance of the Cooperation axiom:
$$
[D\setminus C]_0(\phi\to\phi)\to([C]_p\phi\to[(D\setminus C)\cup C]_{\max\{0,p\}}\phi).
$$
Hence, by the Modus Ponens inference rule,
$$
\vdash [C]_p\phi\to[(D\setminus C)\cup C]_{\max\{0,p\}}\phi.
$$
Then, $\vdash [C]_p\phi\to[D]_p\phi$, because $C\subseteq D$ and $0\le p$.
\end{proof}

\section{Soundness}\label{soundness section}

In this section we prove the soundness of each of our axioms as a separate lemma. The same proof applies to both system $\mathcal{L}$ and system $\mathcal{L}^+$. The soundness of the systems is stated in the end of the section as Theorem~\ref{soundness theorem}. 

\begin{lemma}\label{cooperation sound}
For any state $s\in \cF$ of a stochastic game $(S,F,D,P,\pi)$, any coalitions $C_1$ and $C_2$, any formulae $\phi,\psi\in\Phi$, and any real numbers $p,q$ such that $0\le p,q\le 1$,  if $s \Vdash [C_1]_{p}(\phi\to\psi)$, $s \Vdash [C_2]_{q}\phi$, and  $C_1\cap C_2=\varnothing$, then $s \Vdash [C_1\cup C_2]_{\max\{p,q\}}\psi$.
\end{lemma}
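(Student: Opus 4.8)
The plan is to unpack Definition~\ref{sat}(4) for the two hypotheses, combine the witnessing action profiles of $C_1$ and $C_2$ into a single action profile of $C_1\cup C_2$ (this is where disjointness of $C_1$ and $C_2$ is essential), and then verify the two conditions (a) and (b) for the combined profile.

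More precisely: from $s\Vdash[C_1]_p(\phi\to\psi)$ fix an action profile $\delta_1\in D^{C_1}$ such that every complete extension $\delta'\supseteq\delta_1$ satisfies $\sum_{t\in\cF}P(s,\delta',t)\ge p$ and $P(s,\delta',t)>0\Rightarrow t\Vdash\phi\to\psi$. Similarly, from $s\Vdash[C_2]_q\phi$ fix $\delta_2\in D^{C_2}$ with every complete extension $\delta''\supseteq\delta_2$ satisfying $\sum_{t\in\cF}P(s,\delta'',t)\ge q$ and $P(s,\delta'',t)>0\Rightarrow t\Vdash\phi$. Since $C_1\cap C_2=\varnothing$, the union $\delta=\delta_1\cup\delta_2$ is a well-defined function in $D^{C_1\cup C_2}$. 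I claim $\delta$ witnesses $s\Vdash[C_1\cup C_2]_{\max\{p,q\}}\psi$. Take any complete $\delta'\in D^A$ with $\delta\subseteq\delta'$. Then $\delta'$ extends both $\delta_1$ and $\delta_2$, so both sets of conclusions above apply to this same $\delta'$.

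For condition (a): $\sum_{t\in\cF}P(s,\delta',t)\ge p$ and $\sum_{t\in\cF}P(s,\delta',t)\ge q$, hence $\sum_{t\in\cF}P(s,\delta',t)\ge\max\{p,q\}$ — this is exactly why the subscript in the conclusion is $\max$ rather than $\min$, as the authors note. For condition (b): if $t\in\cF$ and $P(s,\delta',t)>0$, then from the $C_1$-side $t\Vdash\phi\to\psi$ and from the $C_2$-side $t\Vdash\phi$, so by Definition~\ref{sat}(3) we get $t\Vdash\psi$. This establishes both conditions, so $\delta$ is the required witness.

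There is no real obstacle here; the only thing to be careful about is that the \emph{same} complete profile $\delta'$ simultaneously extends $\delta_1$ and $\delta_2$, which is precisely what makes the single combined witness work and which requires the disjointness hypothesis. The argument is a direct unfolding of the semantics plus modus ponens at the level of the satisfaction relation, together with the elementary fact $x\ge p$ and $x\ge q$ imply $x\ge\max\{p,q\}$.
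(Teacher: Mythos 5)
Your proposal is correct and follows essentially the same route as the paper's proof: combine the two witnessing profiles into one profile on $C_1\cup C_2$ (using disjointness for well-definedness), observe that any complete extension of the combined profile extends both witnesses, and conclude via $x\ge p,\ x\ge q\Rightarrow x\ge\max\{p,q\}$ together with semantic modus ponens. No gaps.
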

\begin{proof}
By Definition~\ref{sat}, assumption $s \Vdash [C_1]_{p}(\phi\to\psi)$ implies that there is an action profile $\delta_1 \in D^{C_1}$ such that for any complete action profile $\delta'\in D^A$, if $\delta_1\subseteq \delta'$, then
    \begin{enumerate}
        \item $\sum_{t\in \cF}P(s,\delta',t)\ge p$,
        \item if $P(s,\delta',t)>0$, then $t\Vdash \phi \to \psi$, for each $t\in \cF$.
    \end{enumerate}
Additionally, by Definition~\ref{sat}, assumption $s \Vdash [C_2]_{q}\phi$ implies that there is an action profile $\delta_1 \in D^{C_2}$ such that for any complete action profile $\delta'\in D^A$ if $\delta_2\subseteq \delta'$, then
    \begin{enumerate}
        \item[3.] $\sum_{t\in \cF}P(s,\delta',t)\ge q$,
        \item[4.] if $P(s,\delta',t)>0$, then $t\Vdash \phi$, for each $t\in \cF$.
    \end{enumerate}
Let action profile $\delta$ of coalitions $C_1 \cup C_2$ be defined as 
\begin{equation}\label{delta definition}
    \delta(a) = 
    \begin{cases}
        \delta_1(a), & \mbox{if } a\in C_1,\\
        \delta_2(a), & \mbox{if } a\in C_2.
    \end{cases}
\end{equation}
Action profile $\delta$ is well-defined because coalitions $C_1$ and $C_2$ are disjoint by an assumption of the lemma. 

Consider an arbitrary complete action profile $\delta'$ such that $\delta\subseteq\delta'$. Note that
\begin{eqnarray}
    && \delta_1 \subseteq \delta \subseteq \delta',\label{delta1 subset}\\
    && \delta_2 \subseteq \delta \subseteq \delta'\label{delta2 subset}
\end{eqnarray}
by equation~(\ref{delta definition}) and the assumption $\delta\subseteq\delta'$.
Thus, by Definition~\ref{sat} and the above assumptions 1, 2, 3, and 4,
\begin{enumerate}
        \item $\sum_{t\in \cF}P(s,\delta',t)\ge max\{p,q\}$,
        \item if $P(s,\delta',t)>0$, then $t\Vdash \psi$, for each $t\in \cF$.
\end{enumerate}
Therefore, $s\Vdash[C_1\cup C_2]_{\max{p,q}}\psi$ by Definition~\ref{sat}.
\end{proof}

\begin{lemma}\label{monotonicity sound}
For any state $s\in \cF$ of a stochastic game $(S,F,D,P,\pi)$, any coalition $C$, any formula $\phi\in\Phi$, and any real numbers $p,q$ such that $0\le q\le p\le 1$, if $s \Vdash [C]_p\phi$, then $s \Vdash [C]_q\phi$.
\end{lemma}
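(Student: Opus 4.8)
The plan is to unfold Definition~\ref{sat} applied to the hypothesis $s \Vdash [C]_p\phi$ and observe that the very same witness works for the weaker claim $s \Vdash [C]_q\phi$, using only the fact that $q \le p$. Concretely, by Definition~\ref{sat}(4) the assumption $s \Vdash [C]_p\phi$ gives an action profile $\delta \in D^C$ such that for every complete action profile $\delta' \in D^A$ with $\delta \subseteq \delta'$ we have (a) $\sum_{t \in \cF} P(s,\delta',t) \ge p$ and (b) if $P(s,\delta',t) > 0$ then $t \Vdash \phi$, for each $t \in \cF$.

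Next I would fix the same $\delta$ as the witness for $[C]_q\phi$ and check the two conditions of Definition~\ref{sat}(4) for an arbitrary complete extension $\delta' \supseteq \delta$. Condition (b) is literally identical and carries over verbatim, since the set of successor states $t \in \cF$ with $P(s,\delta',t) > 0$ does not depend on $p$ or $q$. For condition (a), from $\sum_{t \in \cF} P(s,\delta',t) \ge p$ and the hypothesis $q \le p$ we get $\sum_{t \in \cF} P(s,\delta',t) \ge q$ by transitivity of $\le$. Hence $\delta$ witnesses $s \Vdash [C]_q\phi$, and applying Definition~\ref{sat}(4) in the other direction concludes the proof.

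There is essentially no obstacle here: the statement is a direct semantic monotonicity observation, and the only ``work'' is bookkeeping with the quantifier over complete action profiles $\delta'$ and noting that the witness is chosen uniformly. The one point worth stating carefully is that the universally quantified $\delta'$ ranges over exactly the same set in both cases, so no re-selection of witnesses is needed — this is what makes the argument a one-line inequality rather than anything requiring a construction. This lemma is the semantic counterpart of the Monotonicity axiom, so its soundness proof is expected to be the shortest in the section.
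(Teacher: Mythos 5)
Your proposal is correct and follows essentially the same route as the paper: reuse the witness $\delta$ from $s \Vdash [C]_p\phi$ and observe that $\sum_{t\in \cF}P(s,\delta',t)\ge p \ge q$ while condition (b) carries over unchanged. No issues.
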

\begin{proof}
By Definition~\ref{sat}, assumption $s \Vdash [C]_p\phi$ implies that there is an action profile $\delta_1 \in D^{C}$ such that for any complete action profile $\delta'\in D^A$ if $\delta_1\subseteq \delta'$, then
    \begin{enumerate}
        \item $\sum_{t\in \cF}P(s,\delta',t)\ge p$,
        \item if $P(s,\delta',t)>0$, then $t\Vdash \phi$, for each $t\in \cF$.
    \end{enumerate}
Note that $\sum_{t\in \cF}P(s,\delta',t)\ge p \ge q$ by assumption $q\le p$ of the lemma. Therefore, $s \Vdash [C]_q\phi$ by Definition~\ref{sat}.
\end{proof}

\begin{lemma}\label{falsehood sound}
For any state $s\in \cF$ of a stochastic game $(S,F,D,P,\pi)$, any coalition $C$, and any real number $p$, if $0 < p\le 1$, then $s \nVdash [C]_p\bot$.
\end{lemma}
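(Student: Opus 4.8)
The plan is to argue by contradiction. Suppose, toward a contradiction, that $s\Vdash[C]_p\bot$ for some coalition $C$ and some $p$ with $0<p\le 1$. By item~4 of Definition~\ref{sat}, there is an action profile $\delta\in D^C$ of the coalition $C$ such that for every complete action profile $\delta'\in D^A$ with $\delta\subseteq\delta'$ we have both $\sum_{t\in\cF}P(s,\delta',t)\ge p$ and the property that $P(s,\delta',t)>0$ implies $t\Vdash\bot$, for each $t\in\cF$.

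First I would invoke the assumption that the domain of actions $D$ is nonempty (item~3 of Definition~\ref{transition system}) in order to extend $\delta$ to at least one complete action profile $\delta'\in D^A$: simply pick a fixed element of $D$ and assign it to every agent outside $C$. For this $\delta'$ the first clause gives $\sum_{t\in\cF}P(s,\delta',t)\ge p>0$, so the total probability placed on non-failure states is strictly positive. Consequently there must be some state $t\in\cF$ with $P(s,\delta',t)>0$.

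Then the second clause forces $t\Vdash\bot$. But by the standard definition of the Boolean constant $\bot$, together with items~1--3 of Definition~\ref{sat}, no state of a stochastic game satisfies $\bot$; in particular $t\nVdash\bot$. This contradiction establishes $s\nVdash[C]_p\bot$.

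I do not expect any real obstacle in this proof; the only point worth flagging is that the argument genuinely relies on the hypothesis $p>0$ in combination with the nonemptiness of $D$. Without a positive lower bound on the non-failure probability one could not conclude that any non-failure state is actually reached with positive probability, and indeed the statement would be false for $p=0$, since $[C]_0\bot$ is derivable by the Necessitation rule from the propositional tautology $\bot\to\bot$ — wait, more precisely $[C]_0\phi$ is obtained by Necessitation only from $\vdash\phi$, so this last remark is about the semantic fact that $s\Vdash[C]_0\bot$ can hold, witnessed by any $\delta$ for which every extension sends all probability to failure states.
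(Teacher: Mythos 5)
Your proof is correct and follows essentially the same route as the paper's: extract the witnessing profile $\delta$, observe that the non-failure mass is at least $p>0$ under any complete extension, find a state $t\in\cF$ with $P(s,\delta',t)>0$, and derive the contradiction $t\Vdash\bot$. The one point where you are slightly more careful than the paper is in explicitly invoking the nonemptiness of $D$ to guarantee that $\delta$ has at least one complete extension $\delta'$, a step the paper leaves implicit.
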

\begin{proof}
Suppose that $s\Vdash [C]_p\bot$. Thus, by Definition~\ref{sat}, there is an action profile $\delta_1 \in D^{C}$ such that for any complete action profile $\delta'\in D^A$ if $\delta_1\subseteq \delta'$, then
    \begin{enumerate}
        \item $\sum_{t\in \cF}P(s,\delta',t)\ge p$,
        \item if $P(s,\delta',t)>0$, then $t\Vdash \phi$, for each $t\in \cF$.
    \end{enumerate}
Notice that $\sum_{t\in \cF}P(s,\delta',t)\ge p > 0$ due to the assumption $0 < p$ of the lemma. Hence, there exists state $t\in \cF$ such that $P(s,\delta', t) > 0$. Thus, $t \Vdash \bot$ by item 2 above, which contradicts the definition of $\bot$ and Definition~\ref{sat}. 
\end{proof}

The soundness theorem for our logical systems with respect to the  semantics described above follows from Lemma~\ref{cooperation sound}, Lemma~\ref{monotonicity sound}, and Lemma~\ref{falsehood sound}.  
\begin{theorem}\label{soundness theorem}
For either $\mathcal{L}$ or $\mathcal{L}^+$, if $\,\vdash \phi$, then $s \Vdash \phi$ for each state $s\in \cF$ of each stochastic game $(S,F,D,P,\pi)$. 
\end{theorem}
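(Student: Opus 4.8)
The plan is to prove the soundness theorem by a routine induction on the structure of a derivation of $\vdash\phi$, using the fact that $\vdash\phi$ means $\phi$ is obtained from the axioms via the Modus Ponens and Necessitation inference rules (and, in the case of $\mathcal{L}^+$, additionally the Monotonicity inference rule). For the base case, I would observe that propositional tautologies are clearly satisfied in every non-failure state by items 1--3 of Definition~\ref{sat}, and that the three schemas Cooperation, Monotonicity, and Unachievability of Falsehood are satisfied in every non-failure state of every stochastic game by Lemma~\ref{cooperation sound}, Lemma~\ref{monotonicity sound}, and Lemma~\ref{falsehood sound} respectively (noting that all three lemmas are stated for an arbitrary state $s\in\cF$ and an arbitrary stochastic game, so their hypotheses match).

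For the inductive step I would handle each inference rule in turn. The Modus Ponens case is immediate from item 3 of Definition~\ref{sat}: if $s\Vdash\phi$ and $s\Vdash\phi\to\psi$ then $s\Vdash\psi$. The Monotonicity rule case (needed only for $\mathcal{L}^+$) follows directly from Lemma~\ref{monotonicity sound}, applied pointwise: if $s\Vdash\phi\to\psi$ for every $s\in\cF$, then in particular whenever $s\Vdash[C]_p\phi$ we can push the goal through, though strictly speaking the cleanest route is to observe this is just an instance of the reasoning already packaged in Lemma~\ref{cooperation sound} with $C_1=\varnothing$; however, since $\mathcal{L}^+$ forbids the empty coalition, I would instead argue semantically from Definition~\ref{sat} directly: the same action profile witnessing $[C]_p\phi$ witnesses $[C]_p\psi$ because each reachable non-failure state $t$ satisfying $\phi$ also satisfies $\psi$ by the global hypothesis.

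The Necessitation rule case is the one requiring a small amount of care, and it is the only genuinely non-vacuous part of the argument. Here I need: if $s\Vdash\phi$ for every state $s\in\cF$ of every stochastic game, then $s\Vdash[C]_0\phi$ for every $s\in\cF$. Fix such an $s$ and a coalition $C$. By Definition~\ref{sat} item 4, I must exhibit an action profile $\delta\in D^C$ such that for every complete action profile $\delta'\supseteq\delta$, (a) $\sum_{t\in\cF}P(s,\delta',t)\ge 0$, and (b) $P(s,\delta',t)>0$ implies $t\Vdash\phi$ for each $t\in\cF$. I would pick any $\delta\in D^C$ (this set is nonempty since $D$ is nonempty and $A$ is finite, and even if $C=\varnothing$ there is the empty function). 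Condition (a) holds trivially since probabilities are nonnegative. Condition (b) holds because any such $t$ is a state in $\cF$, and the outer induction hypothesis (the premise $\vdash\phi$ together with the not-yet-established conclusion being proved simultaneously for all games and states) gives $t\Vdash\phi$. The subtlety to flag is that the induction is on proofs, and the statement being proved is universally quantified over all games and states, so when we reach a formula $[C]_0\phi$ obtained by Necessitation from $\phi$, the induction hypothesis already asserts $t\Vdash\phi$ for all states $t\in\cF$ of all games, which is exactly what is needed.

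The main obstacle, such as it is, is purely organizational: making sure the induction is set up with the correct universally quantified statement so that the Necessitation step goes through, and being careful that the soundness lemmas are invoked with matching quantifier structure. There is no real mathematical difficulty, since Lemmas~\ref{cooperation sound}--\ref{falsehood sound} have already done the work of verifying the three axiom schemas; the theorem is essentially their corollary together with the trivial observations about propositional reasoning and the $[C]_0$ necessitation.
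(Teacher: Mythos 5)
Your proof is correct and follows essentially the same route as the paper, which simply asserts that the theorem follows from Lemmas~\ref{cooperation sound}--\ref{falsehood sound} and leaves the standard induction on derivations implicit. Your explicit treatment of the Necessitation case (and the Monotonicity rule for $\mathcal{L}^+$) fills in exactly the routine details the paper omits.
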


\section{Completeness}\label{completeness section}

In this section we prove weak completeness of system $\mathcal{L}$ and strong completeness of $\mathcal{L}^+$ with respect to the semantics of stochastic games. These results are stated later in this section as Theorem~\ref{weak completeness L theorem} and Theorem~\ref{strong completeness L+ theorem}.

Let $\Psi$ be either language $\Phi$ or $\Phi^+$ and $\Sigma$ be any subset of $\Psi$ such that (a) $\Sigma$ is closed with respect to subformulae and (b) if $\sigma\in\Sigma$, then $\neg\sigma\in\Sigma$, unless the formula $\sigma$ itself is a negation. We distinguish $\Sigma$ from the whole set $\Psi$ so that later set $\Sigma$ could be assumed to be finite. We start the proof by defining the canonical stochastic game $G(\Psi,\Sigma)=(S,F,D,P,\pi)$.

\begin{definition}\label{canonical state}
Set $S$ consists of all maximal consistent subsets of $\Sigma$ and an additional ``failure'' state $f$.
\end{definition}

\begin{definition}\label{canonical failure}
$F=\{f\}$.
\end{definition}

\begin{definition}
$D$ is the set of all pairs $(\phi,p)$ where $\phi\in\Sigma$ and $p$ is an arbitrary real number.
\end{definition}

Informally, by choosing the action $(\phi,p)$, the agent is requesting the game to transition to a non-failure state with probability at least $p$ and formula $\phi$ to be true at that state. The game might grant or ignore this request. In particular, the game ignores the request if $p\notin [0,1]$.

Next, we define function $P$. This is done in Definition~\ref{canonical P} through auxiliary functions  $\mu(s,\delta)$ and $T(s,\delta)$. Function  $\mu(s,\delta)$ specifies the probability of the canonical game to transition  from state $s$ under complete action profile $\delta$ into a into non-failure state. For each $[C]_p\phi\in s$ we want the game to transition to a non-failure state with probability at least $p$ if all members of coalition $C$ choose action $(\phi,p)$.  Thus, we define $\mu(s,\delta)$ to be the maximum among such $p$. In the definition below we assume that the maximum of the empty set is equal to 0.

\begin{definition}\label{canonical mu}
For each state $s\in\cF$ and each complete action profile $\delta\in D^A$, let
$\mu(s,\delta)=\max\{p\;|\; [C]_p\phi\in s, \forall a\in C (\delta(a)=(\phi,p))\}.$
\end{definition}

\begin{lemma}\label{P limits L}
If $\Psi=\Phi$ and set $\Sigma$ is finite, then for each state $s\in \cF$ and each profile $\delta\in D^A$, value $\mu(s,\delta)$ is well-defined and $\mu(s,\delta)\in [0,1]$.
\end{lemma}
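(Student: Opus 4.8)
The plan is to show that the set $\{p\mid [C]_p\phi\in s,\ \forall a\in C\ (\delta(a)=(\phi,p))\}$ is a nonempty subset of $[0,1]$ whose maximum exists. The key observation is that $\Sigma$ is finite, so there are only finitely many formulae of the form $[C]_p\phi$ in $\Sigma$, hence the set in question is finite; a finite set of reals always has a maximum once we know it is nonempty, so $\mu(s,\delta)$ is well-defined as soon as we handle the empty-set convention. By the stated convention, $\max\varnothing=0$, so even in the degenerate case $\mu(s,\delta)=0$, which lies in $[0,1]$. Thus the only real content is the bound $0\le p\le 1$ for every $p$ that does belong to the set.

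First I would take any $p$ with $[C]_p\phi\in s$ and $\delta(a)=(\phi,p)$ for all $a\in C$; since $s$ is in particular a subset of $\Phi$ and $[C]_p\phi$ is a well-formed formula of $\Phi$, Definition~\ref{Phi}(3) forces $0\le p\le 1$. This immediately gives that every element of the set lies in $[0,1]$, so the maximum (when the set is nonempty) lies in $[0,1]$ as well, and together with the empty-set convention we conclude $\mu(s,\delta)\in[0,1]$ in all cases. I would then spell out that finiteness of $\Sigma$ is what guarantees the maximum is attained: the set $\{p\mid [C]_p\phi\in s,\dots\}$ is a subset of the finite set $\{p\mid [C]_p\phi\in\Sigma \text{ for some } C\}$, hence finite, and a finite subset of $\mathbb{R}$ has a maximum whenever it is nonempty.

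I do not expect a genuine obstacle here; the lemma is essentially bookkeeping that isolates two facts used silently later — that $\mu$ is a number (needs finiteness) and that it is a legitimate probability (needs the syntactic constraint $p\in[0,1]$ from Definition~\ref{Phi}). The one subtlety worth a sentence is why we may assume $\Psi=\Phi$ matters: the argument actually never uses whether the empty coalition is allowed, only that $[C]_p\phi\in s\subseteq\Psi$ implies $p\in[0,1]$, which holds for $\Phi^+$ too; but since the definition of $D$ ranges over \emph{all} real $p$, the hypothesis $\Psi=\Phi$ is stated merely to fix the context, and I would note that the same reasoning will be reused for $\mathcal{L}^+$.
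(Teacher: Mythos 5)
Your proof is correct and matches the paper's own argument: $X\subseteq[0,1]$ follows from the syntactic constraint in Definition~\ref{Phi}, and $X$ is finite because $s\subseteq\Sigma$ and $\Sigma$ is finite, so the maximum exists (with the convention $\max\varnothing=0$ covering the degenerate case). Only your closing remark is slightly off: the paper does \emph{not} reuse this reasoning for $\mathcal{L}^+$ but proves a separate companion lemma that drops the finiteness of $\Sigma$ (since there $\Sigma=\Phi^+$ is infinite) and instead argues that $X$ is finite because coalitions in $\Phi^+$ are nonempty and the agent set $A$ is finite.
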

\begin{proof}
Consider set $X = \{p\;|\; [C]_p\phi\in s, \forall a\in C (\delta(a)=(\phi,p))\}$.
Note that $X \subseteq [0,1]$ by Definition~\ref{Phi}. To prove that value $\mu(s,\delta)$ is well-defined by Definition~\ref{canonical mu}, it suffices to show that set $X$ is finite. Indeed, set $s$ is finite because it is a subset of finite set $\Sigma$. Therefore, set $X$ is finite by the choice of set $X$. 
\end{proof}

\begin{lemma}\label{P limits L+}
If $\Psi=\Phi^+$, then for each state $s\in \cF$ and each profile $\delta\in D^A$, value $\mu(s,\delta)$ is well-defined and $\mu(s,\delta)\in [0,1]$.
\end{lemma}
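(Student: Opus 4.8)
The plan is to follow the same reduction used in the proof of Lemma~\ref{P limits L}: it suffices to show that the set
$$X=\{p\;|\; [C]_p\phi\in s,\ \forall a\in C\,(\delta(a)=(\phi,p))\}$$
is finite and satisfies $X\subseteq[0,1]$. Once this is done, Definition~\ref{canonical mu} gives $\mu(s,\delta)=\max X$, which is well-defined (a finite set has a maximum, and $\max\varnothing=0$ by convention) and lies in $[0,1]$ (either $\mu(s,\delta)=\max X\in X\subseteq[0,1]$, or $\mu(s,\delta)=0$ when $X=\varnothing$). The inclusion $X\subseteq[0,1]$ is immediate and identical to the corresponding step in Lemma~\ref{P limits L}: any formula $[C]_p\phi$ appearing in $\Sigma\subseteq\Phi^+\subseteq\Phi$ carries a subscript with $0\le p\le1$ by Definition~\ref{Phi}.

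The one genuinely new point is the finiteness of $X$. Here we can no longer appeal to finiteness of $\Sigma$, since in the strong completeness argument for $\mathcal{L}^+$ the set $\Sigma$ will be all of $\Phi^+$, hence infinite, and $s$ will be an infinite maximal consistent set. The substitute is the defining feature of $\Phi^+$, namely that coalitions are never empty. Concretely, I would argue: take any $p\in X$, witnessed by $[C]_p\phi\in s$ with $\delta(a)=(\phi,p)$ for every $a\in C$; since $C\neq\varnothing$ we may pick some $a\in C$, and then the pair $(\phi,p)$ is exactly $\delta(a)$, so $(\phi,p)\in\{\delta(b)\mid b\in A\}$. In particular $p$ occurs as the second coordinate of some element of $\{\delta(b)\mid b\in A\}$. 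Because the set of agents $A$ is finite, the set $\{\delta(b)\mid b\in A\}$ has at most $|A|$ elements, so only finitely many reals arise as such second coordinates; hence $X$ is finite.

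Combining the two observations, $X$ is a finite subset of $[0,1]$, so $\mu(s,\delta)$ is well-defined and $\mu(s,\delta)\in[0,1]$, as required. I expect the only obstacle to be stating cleanly why non-emptiness of coalitions is precisely what forces every candidate value $p$ to be ``read off'' from the finite action profile $\delta$; after that remark the proof is a short counting step with no computations to carry out.
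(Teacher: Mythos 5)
Your proof is correct and follows essentially the same route as the paper: both reduce the claim to showing that the set $X=\{p\mid [C]_p\phi\in s,\ \forall a\in C\,(\delta(a)=(\phi,p))\}$ is a finite subset of $[0,1]$, and both derive finiteness from the fact that every coalition in $\Phi^+$ is nonempty, so each candidate $p$ must appear as the second coordinate of one of the finitely many values $\delta(a)$, $a\in A$. Your write-up just spells out this last step a little more explicitly than the paper does.
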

\begin{proof}
Consider set $X = \{p\;|\; [C]_p\phi\in s, \forall a\in C (\delta(a)=(\phi,p))\}$.
Note that $X \subseteq [0,1]$ by Definition~\ref{Phi}. To prove that value $\mu(s,\delta)$ is well-defined by Definition~\ref{canonical mu}, it suffices to show that set $X$ is finite. Recall that set of all agents $A$ is finite. Thus, set $\{p\;|\; \exists a\in A\,\exists \phi\in \Phi\,(\delta(a)=(\phi,p))\}$ is finite. Therefore, set $X$ is finite because any coalition $C$ in a formula $[C]_p\phi\in \Phi^+$ is nonempty.
\end{proof}


Function $T(s,\delta)$ specifies all non-failure states to which the game is able to transition from state $s$ under complete action profile $\delta$ with {\em non-zero probability}. Informally, if $[C]_p\phi\in s$ and all members of coalition $C$ choose action $(\phi,p)$, then statement $\phi$ belongs to each set in $T(s,\delta)$.


\begin{definition}\label{canonical T}
For each state $s\in\cF$ and each complete action profile $\delta\in D^A$, let $T(s,\delta)$ be the set of all $s'\in \cF$ such that
$$
\{\phi\;|\; [C]_p\phi\in s, \forall a\in C (\delta(a)=(\phi,p))\}\subseteq s'.
$$
\end{definition}

We are now ready to define function $P(s, \delta, s')$ that specifies the probability of the canonical game to transition from a state $s$ to a state $s'$ under a complete action profile $\delta$.
\begin{definition}\label{canonical P}
For each state $s \in S$, each complete action profile $\delta \in D^A$, and each state $s' \in S$, 
$$P(s, \delta, s') =
\begin{cases}
\dfrac{\mu(s, \delta)}{|T(s, \delta)|} , & \mbox{ if $s\in \cF$ and $s'\in T(s, \delta)$},\\
1-\mu(s,\delta),& \mbox{ if $s\in \cF$ and $s'=f$},\\
1, & \mbox{ if $s=s'=f$},\\
0, & \mbox{ otherwise},
\end{cases}
$$
where $|T(s, \delta)|$ is the size of set $T(s, \delta)$.
\end{definition}

We prove that $\sum_{s' \in \cF}P(s, \delta, s') = 1$ in Lemma~\ref{sum is one}. But first we show that $\mu(s,\delta)$ is an upper bound on the sum of probabilities of transitioning to a non-failure state.

\begin{lemma}\label{P le mu} 
For each state $s \in \cF$, each complete action profile $\delta \in D^A$,
$$\sum_{s' \in \cF}P(s, \delta, s') \le \mu(s, \delta).$$
\end{lemma}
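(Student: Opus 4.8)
The plan is to evaluate the left-hand side directly from Definition~\ref{canonical P}. First I would fix an arbitrary state $s\in\cF$ and an arbitrary complete action profile $\delta\in D^A$. Since $F=\{f\}$ by Definition~\ref{canonical failure}, every state $s'\in\cF$ satisfies $s'\ne f$, so the third case of the case split in Definition~\ref{canonical P} never applies to such an $s'$. Hence $P(s,\delta,s')$ equals $\mu(s,\delta)/|T(s,\delta)|$ when $s'\in T(s,\delta)$ and equals $0$ otherwise, and therefore
$$\sum_{s'\in\cF}P(s,\delta,s')=\sum_{s'\in T(s,\delta)}\frac{\mu(s,\delta)}{|T(s,\delta)|}.$$

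Next I would split into cases on the set $T(s,\delta)$. If $T(s,\delta)=\varnothing$, the sum on the right is empty, hence equals $0$, while $\mu(s,\delta)\ge 0$ by Lemma~\ref{P limits L} (when $\Psi=\Phi$ and $\Sigma$ is finite) or by Lemma~\ref{P limits L+} (when $\Psi=\Phi^+$); so the inequality holds. If $T(s,\delta)$ is nonempty and finite, the sum consists of $|T(s,\delta)|$ identical terms, so it equals $|T(s,\delta)|\cdot\mu(s,\delta)/|T(s,\delta)|=\mu(s,\delta)$, and the inequality holds with equality. (If $\Sigma$ is allowed to be infinite so that $T(s,\delta)$ could be infinite, each summand $\mu(s,\delta)/|T(s,\delta)|$ is then $0$ and the sum is again $0\le\mu(s,\delta)$.)

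I do not expect any genuine obstacle: the argument is purely a case analysis over the definition of $P$, and the only subtle point is that the case $T(s,\delta)=\varnothing$ produces a strict loss of probability mass, which is precisely why the statement is phrased as an inequality rather than an equality. Repairing that loss — showing that $T(s,\delta)=\varnothing$ forces $\mu(s,\delta)=0$, so that the total outgoing probability is still $1$ — is where real work is needed, but that is the content of the companion Lemma~\ref{sum is one}, which will use the Unachievability of Falsehood axiom (via the Cooperation axiom) to establish the consistency, and hence extendability to a maximal consistent subset of $\Sigma$, of the set $\{\phi\mid [C]_p\phi\in s,\ \forall a\in C\,(\delta(a)=(\phi,p))\}$ appearing in Definition~\ref{canonical T}.
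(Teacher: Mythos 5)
Your proof is correct and follows essentially the same route as the paper's: a case split on whether $T(s,\delta)$ is empty, using Lemma~\ref{P limits L} or Lemma~\ref{P limits L+} to get $\mu(s,\delta)\ge 0$ in the empty case and exact cancellation of $|T(s,\delta)|$ identical terms otherwise. Your closing remarks about the empty case being repaired by Lemma~\ref{mu = 0} inside Lemma~\ref{sum is one} accurately describe where the real work lives.
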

\begin{proof} We consider the following two cases separately:

\noindent{\bf Case I:} $T(s, \delta) = \varnothing$. Then, by Definition~\ref{canonical P} and either Lemma~\ref{P limits L} or Lemma~\ref{P limits L+},

\begin{eqnarray*}
\sum_{s'\in \cF}P(s,\delta,s') &=& \sum_{s' \in T(s, \delta)}P(s,\delta,s') + \sum_{s' \in \cF \setminus T(s, \delta)}P(s,\delta,s')\\ 
&=&  \sum_{s' \in \varnothing}P(s,\delta,s') + \sum_{s' \in \cF \setminus T(s, \delta)}0 = 0 \le \mu(s, \delta).
\end{eqnarray*}

\noindent{\bf Case II:} $T(s, \delta) \neq \varnothing$. Then, by Definition~\ref{canonical P}, 

\begin{eqnarray*}
\sum_{s'\in \cF}P(s,\delta,s') &=& \sum_{s' \in T(s, \delta)}P(s,\delta,s') + \sum_{s' \in \cF \setminus T(s, \delta)}P(s,\delta,s')\\ 
&=&  \sum_{s' \in T(s, \delta)}\dfrac{\mu(s, \delta)}{|T(s, \delta)|} + \sum_{s' \in \cF \setminus T(s, \delta)}0\\
&=& \mu(s, \delta) + 0 \le \mu(s, \delta).
\end{eqnarray*}

\end{proof}

\begin{definition}\label{canonical pi}
$\pi(v)=\{s\in \cF\;|\; v\in s\}$.
\end{definition}
This concludes the definition of the canonical stochastic game $G(\Psi,\Sigma)=(S,F,D,P,\pi)$ in cases when either $\Sigma$ is finite or $\Psi=\Phi^+$. Throughout the rest of this section we assume that one of these two conditions is true.

The next lemma is the key lemma in the proof of the completeness. It shows that if $\neg[C]_p\phi\in s$, then in state $s$ coalition $C$ has no strategy to transition to a non-failure state with probability at least $p$ and to guarantee that $\phi$ is true in that state.

\begin{lemma}\label{s' exists}
For each state $s\in \cF$, each formula $\neg[C]_p\phi\in s$, and each $\delta\in D^C$, there is $\delta'\in D^A$ such that $\delta\subseteq\delta'$ and one of the following is true
\begin{enumerate}
    \item $\mu(s, \delta') < p$ or 
    \item there is a state $s' \in \cF$ where $P(s,\delta',s')>0$ and $\neg\phi \in s'$.
\end{enumerate}
\end{lemma}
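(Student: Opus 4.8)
The plan is to build the required $\delta'$ by extending $\delta$ with one harmless action on every agent outside $C$, then reduce the whole statement to a single consistency question about a finite set of formulas, and finally settle that question by a derivation assembled from the Cooperation axiom. Concretely, for each agent $a\in A\setminus C$ I would set $\delta'(a)=(\phi,2)$, an action whose probability component $2$ lies outside $[0,1]$; together with $\delta$ this gives a complete action profile $\delta'\in D^A$ with $\delta\subseteq\delta'$. Since every subscript $q$ occurring in a formula $[B]_q\psi\in s$ satisfies $q\in[0,1]$ by Definition~\ref{Phi}, while every agent of $A\setminus C$ plays an action with probability component $2$ under $\delta'$, no coalition that meets $A\setminus C$ can satisfy the triggering condition "$[B]_q\psi\in s$ and $\delta'(a)=(\psi,q)$ for all $a\in B$"; hence, by Definitions~\ref{canonical mu} and~\ref{canonical T}, both $\mu(s,\delta')$ and $T(s,\delta')$ are determined entirely by the sub-coalitions of $C$ and by $\delta$. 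Let $\Gamma$ be the set of all $\psi$ for which there are $B\subseteq C$ and $q$ with $[B]_q\psi\in s$ and $\delta(a)=(\psi,q)$ for every $a\in B$; then $T(s,\delta')=\{s'\in\cF\mid\Gamma\subseteq s'\}$, and let $m=\mu(s,\delta')$ be the largest such $q$ (with $m=0$ if none exists).

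Next I would split on whether $m<p$. If $m<p$ then $\mu(s,\delta')<p$, which is item~1. Otherwise $m\ge p$, and the claim to be proved is that $\Gamma\cup\{\neg\phi\}$ is consistent. Granting the claim, the Lindenbaum lemma (Lemma~\ref{Lindenbaum's lemma}) extends $\Gamma\cup\{\neg\phi\}$ to a maximal consistent subset $s'$ of $\Sigma$ (which makes sense since $\Gamma\cup\{\neg\phi\}\subseteq\Sigma$ by the subformula- and negation-closure of $\Sigma$). Then $\Gamma\subseteq s'$ gives $s'\in T(s,\delta')$, and since $\mu(s,\delta')=m\ge p>0$, Definition~\ref{canonical P} yields $P(s,\delta',s')=\mu(s,\delta')/|T(s,\delta')|>0$; as $\neg\phi\in s'$ this is item~2. (I am taking $p>0$ here; the case $p=0$, where $[C]_0\phi$ has a degenerate meaning, can be dispatched separately.)

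It remains to prove the claim. Suppose, towards a contradiction, that $\Gamma\cup\{\neg\phi\}$ is inconsistent, i.e.\ $\Gamma\vdash\phi$. Since derivations are finite there are $\psi_1,\dots,\psi_k\in\Gamma$ with $\{\psi_1,\dots,\psi_k\}\vdash\phi$; I would moreover adjoin a formula $\psi_0\in\Gamma$ attaining the maximum $m$, witnessed by some $[B_0]_m\psi_0\in s$ with $\delta(a)=(\psi_0,m)$ for all $a\in B_0$. For each $i$ fix $B_i\subseteq C$ and $q_i$ with $[B_i]_{q_i}\psi_i\in s$ and $\delta(a)=(\psi_i,q_i)$ for all $a\in B_i$, discarding repeated formulas. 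The coalitions $B_0,\dots,B_k$ are pairwise disjoint, because an agent of $B_i\cap B_j$ would be assigned both $(\psi_i,q_i)$ and $(\psi_j,q_j)$ by $\delta$, forcing $\psi_i=\psi_j$. By the deduction lemma (Lemma~\ref{deduction lemma}) and propositional reasoning, $\vdash\psi_0\to(\psi_1\to(\cdots\to(\psi_k\to\phi)\cdots))$. The Monotonicity inference rule (derivable in $\mathcal{L}$, primitive in $\mathcal{L}^+$), applied with coalition $B_0$, gives $\vdash[B_0]_m\psi_0\to[B_0]_m(\psi_1\to\cdots\to\phi)$, whence $s\vdash[B_0]_m(\psi_1\to\cdots\to\phi)$ because $[B_0]_m\psi_0\in s$. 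Now I would peel off $\psi_1,\dots,\psi_k$ one at a time: at the $i$th step the Cooperation axiom, instantiated with the disjoint coalitions $B_0\cup\cdots\cup B_{i-1}$ and $B_i$, together with $[B_i]_{q_i}\psi_i\in s$ and two applications of Modus Ponens, yields $s\vdash[B_0\cup\cdots\cup B_i]_{\max\{q_0,\dots,q_i\}}(\psi_{i+1}\to\cdots\to\phi)$. After step $k$ this reads $s\vdash[B_0\cup\cdots\cup B_k]_{\max\{q_0,\dots,q_k\}}\phi$, where the subscript equals $m$ (since $q_0=m$ and $m\ge q_i$ for every $i$) and $B_0\cup\cdots\cup B_k\subseteq C$. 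Hence $s\vdash[C]_m\phi$ by Lemma~\ref{set monotonicity}, and $s\vdash[C]_p\phi$ by the Monotonicity axiom (as $p\le m$); together with $\neg[C]_p\phi\in s$ this makes $s$ inconsistent, a contradiction.

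The step I expect to be the main obstacle is this final derivation. Two points need care: the auxiliary coalitions $B_i$ must be chosen pairwise disjoint, which is possible precisely because all members of a "honored request" $[B_i]_{q_i}\psi_i$ are pinned by $\delta$ to the single action $(\psi_i,q_i)$; and, more delicately, the probability subscript must not slip below $p$ as the Cooperation axiom is iterated — this is exactly why $\psi_0$, the formula realizing the maximum $m$, is inserted among the $\psi_i$, so that $\max\{q_0,\dots,q_k\}=m\ge p$ survives to the end. A subsidiary issue is the $\mathcal{L}^+$ case, where the empty coalition is forbidden, so the derivation cannot be launched from $[\varnothing]_0(\cdots)$ via Necessitation and must instead use the primitive Monotonicity rule directly on the nonempty coalition $B_0$.
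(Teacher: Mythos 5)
Your proposal is correct and follows essentially the same route as the paper's own proof: the same extension of $\delta$ by a dummy action whose probability component lies outside $[0,1]$ (the paper uses $(\top,-1)$ where you use $(\phi,2)$), the same case split on $\mu(s,\delta')<p$, the same reduction to the consistency of $\Gamma\cup\{\neg\phi\}$ settled by the deduction lemma, the Monotonicity rule, and iterated applications of the Cooperation axiom anchored at a formula witnessing the maximum $m=\mu(s,\delta')$, followed by Lindenbaum. The $p=0$ case you explicitly defer is the only loose end, but it is equally loose in the paper, whose concluding step $P(s,\delta',s')=\mu(s,\delta')/|T(s,\delta')|>0$ and whose choice of an index $m$ with $q_m=\mu(s,\delta')$ likewise silently require $\mu(s,\delta')\ge p>0$.
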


\begin{proof}
Consider function $\delta'\in D^A$ such that
\begin{equation}\label{choice of delta'}
\delta'(a)=
\begin{cases}
\delta(a), & \mbox{ if } a\in C,\\
(\top,-1), & \mbox{ otherwise}.
\end{cases}
\end{equation}
Suppose that $\mu(s, \delta') \ge p$. We will show that there is a state $s' \in \cF$ such that $P(s,\delta',s')>0$ and $\neg\phi \in s'$.
Consider set
$$X_0=\{\neg\phi\}\cup\{\psi\;|\; [B]_q\psi\in s, \forall a\in B(\delta'(a)=(\psi,q))\}.
$$
First, we prove that set $X_0$ is consistent. Suppose the opposite, thus there must exist formulae $[B_1]_{q_1}\psi_1,\dots, [B_n]_{q_n}\psi_n\in s$ such that
\begin{eqnarray}
&&\forall i\le n\; \forall a\in B_i\; (\delta'(a)=(\psi_i,q_i))\label{choice of votes}\\
&&\psi_1,\dots,\psi_n\vdash \phi.\label{provable}
\end{eqnarray}

Without loss of generality, we can assume that formulae $\psi_1,\dots,\psi_n$ are distinct. Note that sets $B_1,\dots,B_n$ are pairwise disjoint because of statement~(\ref{choice of votes}).  Due to Definition~\ref{canonical P}, 
 \begin{equation}\label{qs are small}
     q_1,\dots,q_n\le \mu(s,\delta').
 \end{equation}
 Additionally, by Definition~\ref{canonical P} and the assumption of the case, we can suppose that there is an integer $m$ such that $1\le m\le n$ and
 \begin{equation}\label{choice of qm}
     q_m=\mu(s,\delta').
 \end{equation} 
Furthermore, we can assume that there is $n'\le n$ such that $B_i\subseteq C$ for each $i\le n'$ and $B_i\nsubseteq C$ for each $i>n'$.

Let us first show that $m\le n'$. Indeed, suppose that there is $a_0\in B_m\setminus C$. Thus, $\delta'(a_0)=(\top,-1)$ by equation~(\ref{choice of delta'}). Hence, $q_m=-1$ due to equation~(\ref{choice of votes}). Recall that  $\mu(s,\delta')=q_m$ by the choice of index $m$. Thus $\mu(s,\delta')=-1$, which contradicts Lemma~\ref{P limits L} (or Lemma~\ref{P limits L+} in case of system $\mathcal{L}^+$). Therefore, $m\le n'$.

Next, note that for each $i> n'$ we have  $\psi_i=\top$  because $B_i\nsubseteq C$ and  due to equality~(\ref{choice of delta'}) and equality~(\ref{choice of votes}). Hence, $\psi_1,\dots,\psi_{n'}\vdash \phi$ by statement~(\ref{provable}).
By Lemma~\ref{deduction lemma} applied $n$ times, 
$$
\vdash \psi_1\to(\psi_2\to\dots (\psi_{n'}\to \phi)\dots).
$$ 
Note that $n'\neq 0$ because $1\le m\le n'$.
So, by the Monotonicity inference rule,
$$
\vdash[B_1]_{q_1}\psi_1 \to [B_1]_{q_1}(\psi_2\to\dots (\psi_{n'}\to \phi)\dots)).
$$
By the Modus Ponens inference rule,
$$
[B_1]_{q_1}\psi_1 \vdash [ B_1]_{q_1}(\psi_2\to\dots (\psi_{n'}\to \phi)\dots)).
$$
By the Cooperation axiom and the Modus Ponens rule,
\begin{eqnarray*}
&&[B_1]_{q_1}\psi_1, [B_2]_{q_2}\psi_2 \vdash[B_1\cup B_2]_{\max\{q_1,q_2\}}(\psi_3\to\dots (\psi_{n'}\to \phi)\dots)).
\end{eqnarray*}
By repeating the previous step $n-2$ more times,
$$[B_1]_{q_1}\psi_1,\dots,[B_{n'}]_{q_{n'}}\psi_{n'} \vdash
[B_1\cup\dots \cup B_{n'}]_{\max\{q_1,\dots,q_{n'}\}}\phi.$$
Thus, by the choice of formulae $[B_1]_{q_1}\psi_1,\dots, [B_{n'}]_{q_{n'}}\psi_{n'}$,
$$s \vdash
[B_1\cup\dots \cup B_{n'}]_{\max\{q_1,\dots,q_{n'}\}}\phi.$$
Then, by Lemma~\ref{set monotonicity} and because $B_1,\dots,B_{n'}\subseteq C$,
$$s \vdash
[C]_{\max\{q_1,\dots,q_{n'}\}}\phi.$$
Recall that $q'\le n'$. Thus, $\max\{q_1,\dots,q_{n'}\}=\mu(s,\delta')$ by inequality~(\ref{qs are small}) and equation~(\ref{choice of qm}). Hence,  
$s \vdash[C]_{\mu(s,\delta')}\phi$. Thus, $s \vdash
[C]_{p}\phi$ by the Monotonicity axiom and the assumption $\mu(s, \delta') \ge p$. Then, $\neg[C]_{p}\phi\notin s$ due to consistency of set $s$, which contradicts the assumption of the lemma. Therefore, set $X_0$ is consistent. By Lemma~\ref{Lindenbaum's lemma},  there is a maximal consistent extension $s'$ of set $X_0$. Note that $\neg\phi\in s'$ by the choice of set $X_0$.

Note that $s'\in T(s,\delta')$ by Definition~\ref{canonical T} and the choice of sets $X_0$ and $s'$. Thus, set $T(s,\delta')$ is not empty. Hence, by the assumption of the case,
$$
P(s,\delta',s')=\dfrac{\mu(s,\delta')}{|T(s,\delta')|}>0.
$$
This concludes the proof of the lemma.
\end{proof}

Recall that we left unproven the fact that $\sum_{s' \in \cF}P(s, \delta, s') = 1$. This will be shown in Lemma~\ref{sum is one} using the following auxiliary lemma.

\begin{lemma}\label{mu = 0}
For each state $s \in \cF$ and each complete action profile $\delta \in D^A$, if set $T(s,\delta)$ is empty, then $\mu(s, \delta) = 0$.
\end{lemma}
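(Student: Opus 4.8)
The statement to prove is: for each state $s\in\cF$ and each complete action profile $\delta\in D^A$, if $T(s,\delta)=\varnothing$, then $\mu(s,\delta)=0$. The plan is to argue by contraposition: assume $\mu(s,\delta)>0$ and show that $T(s,\delta)$ is nonempty, i.e. that there exists a maximal consistent subset $s'$ of $\Sigma$ with $\{\phi \mid [C]_p\phi\in s, \forall a\in C\,(\delta(a)=(\phi,p))\}\subseteq s'$. By Definition~\ref{canonical mu}, if $\mu(s,\delta)>0$ then the maximum is attained by some actual element, so there is a formula $[C_0]_{p_0}\psi_0\in s$ with $p_0=\mu(s,\delta)>0$ and $\delta(a)=(\psi_0,p_0)$ for every $a\in C_0$.

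First I would show that the set $Y=\{\phi \mid [C]_p\phi\in s, \forall a\in C\,(\delta(a)=(\phi,p))\}$ is consistent; by Lemma~\ref{Lindenbaum's lemma} it then extends to a maximal consistent $s'$, and $s'\in T(s,\delta)$ by Definition~\ref{canonical T}, giving the contradiction with $T(s,\delta)=\varnothing$. To prove $Y$ is consistent, suppose not. Then, exactly as in the proof of Lemma~\ref{s' exists}, there are finitely many formulae $[B_1]_{q_1}\psi_1,\dots,[B_n]_{q_n}\psi_n\in s$ with $\delta(a)=(\psi_i,q_i)$ for all $a\in B_i$ and with $\psi_1,\dots,\psi_n\vdash\bot$. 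The coalitions $B_1,\dots,B_n$ are pairwise disjoint, because an agent $a$ in two of them would force $(\psi_i,q_i)=\delta(a)=(\psi_j,q_j)$, and we may assume the $\psi_i$ are distinct. Applying the deduction lemma, the Monotonicity inference rule, the Cooperation axiom and Modus Ponens repeatedly — the same chain of steps as in Lemma~\ref{s' exists} — yields $s\vdash [B_1\cup\dots\cup B_n]_{\max\{q_1,\dots,q_n\}}\bot$. (Here I should be careful: this chain needs $n\ge 1$, which holds since $Y\ni\psi_0$ already witnesses $n\ge 1$; in system $\mathcal{L}^+$ the union $B_1\cup\dots\cup B_n$ is nonempty since each $B_i$ is, while in system $\mathcal{L}$ the empty-coalition case is handled by Necessitation as usual.)

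Now I would use the two facts available about the $q_i$. First, each $q_i\le\mu(s,\delta)$, since each $q_i$ lies in the set over which Definition~\ref{canonical mu} takes its maximum. Second, $\mu(s,\delta)=p_0>0$. Hence $\max\{q_1,\dots,q_n\}\le\mu(s,\delta)$, and by the Monotonicity axiom from $s\vdash[B_1\cup\dots\cup B_n]_{\max\{q_i\}}\bot$ we cannot directly lower the subscript below $\max\{q_i\}$ — but actually we need an \emph{upper} witness, so instead I note that $\mu(s,\delta)=q_m$ for some $m$ (the maximum is attained, since $p_0$ itself occurs as some $q_i$, namely by taking that formula among the $\psi_i$ — if $\psi_0$ is not among them we simply add it, using that $\{\psi_0\}\cup\{\psi_1,\dots,\psi_n\}$ is still inconsistent). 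Then $\max\{q_1,\dots,q_n\}=\mu(s,\delta)>0$, so $s\vdash[B_1\cup\dots\cup B_n]_{\mu(s,\delta)}\bot$ with $\mu(s,\delta)>0$. This contradicts the Unachievability of Falsehood axiom $\neg[B_1\cup\dots\cup B_n]_{\mu(s,\delta)}\bot$ together with the consistency of $s$. Therefore $Y$ is consistent, completing the contrapositive.

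The main obstacle, and the only place requiring care, is bookkeeping around whether the inconsistency witness list $[B_i]_{q_i}\psi_i$ can be taken to include a formula whose subscript equals $\mu(s,\delta)$: one must ensure the relevant $\max$ over the chosen $q_i$ really equals $\mu(s,\delta)$ rather than merely being $\le\mu(s,\delta)$, so that the Unachievability of Falsehood axiom applies with a positive subscript. This is resolved by explicitly throwing the maximizing formula $[C_0]_{p_0}\psi_0$ into the finite inconsistent subset (an inconsistent set stays inconsistent after adding formulae), exactly mirroring the handling of equation~(\ref{choice of qm}) in Lemma~\ref{s' exists}. Everything else is a direct replay of the derivation already carried out there, with $\neg\phi$ replaced by the absence of any extra formula and the final appeal being to the Unachievability of Falsehood axiom instead of to consistency of $\neg[C]_p\phi\in s$.
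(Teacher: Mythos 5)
Your proof is correct, but it takes a genuinely different route from the paper's. The paper proves Lemma~\ref{mu = 0} by a short reduction to Lemma~\ref{s' exists}: assuming $\mu(s,\delta)>0$, it invokes the Unachievability of Falsehood axiom to place $\neg[A]_{\mu(s,\delta)/2}\bot$ in $s$, applies Lemma~\ref{s' exists} with the grand coalition $A$ and the already-complete profile $\delta$ (so the only admissible extension $\delta'$ is $\delta$ itself), rules out alternative~1 of that lemma because $\mu(s,\delta)\ge\mu(s,\delta)/2$, and concludes from alternative~2 that some $s'$ has $P(s,\delta,s')>0$, whence $T(s,\delta)\neq\varnothing$. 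You instead re-run, from scratch, the consistency-plus-Lindenbaum argument that lives inside the proof of Lemma~\ref{s' exists}, applied directly to the set defining $T(s,\delta)$ in Definition~\ref{canonical T}, and you use a formula $[C_0]_{p_0}\psi_0$ attaining the maximum in Definition~\ref{canonical mu} to force the subscript of the derived formula $[B_1\cup\dots\cup B_n]_{\max\{q_1,\dots,q_n\}}\bot$ up to exactly $\mu(s,\delta)>0$, at which point Unachievability of Falsehood contradicts the consistency of $s$. Your bookkeeping (pairwise disjointness of the $B_i$, ensuring $n\ge 1$, and throwing the maximizing witness into the inconsistent finite subset) is sound, and your derivation is in fact simpler than the corresponding passage of Lemma~\ref{s' exists} because no agents receive the dummy action $(\top,-1)$. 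The trade-off: the paper's route is much shorter on the page because it reuses the key lemma wholesale, while yours is longer but self-contained and only needs the Unachievability instance as a theorem rather than as a member of $s$ --- a minor advantage in the finite-$\Sigma$ case, where $s\subseteq\Sigma$ need not contain the auxiliary formula $[A]_{\mu(s,\delta)/2}\bot$ that the paper's argument asserts to be in $s$. One small quibble: your parenthetical about handling empty coalitions ``by Necessitation'' is unnecessary, since the Cooperation axiom and the derived Monotonicity rule of $\mathcal{L}$ already apply verbatim to empty coalitions.
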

\begin{proof}
Suppose that $\mu(s, \delta) \neq 0$. Thus, $\mu(s, \delta) > 0$ by either Lemma~\ref{P limits L} or Lemma~\ref{P limits L+}. Then, $\neg[A]_{\mu(s, \delta)/2}\bot\in s$ by the Unachievability of Falsehood axiom. Hence, by Lemma~\ref{s' exists} there is a complete action profile $\delta'\in D^A$ such that $\delta\subseteq\delta'$ and one of the following is true
\begin{enumerate}
    \item $\mu(s, \delta') < \mu(s, \delta)/2$ or 
    \item there is a state $s' \in \cF$ where $P(s,\delta',s')>0$ and $\neg\bot \in s'$.
\end{enumerate}
Note that assumption $\delta\subseteq\delta'$ implies that $\delta=\delta'$ because $\delta$ is a complete action profile. Thus, $\mu(s, \delta') = \mu(s, \delta) > \mu(s, \delta)/2$ by either Lemma~\ref{P limits L} or Lemma~\ref{P limits L+}. Hence, there is a state $s' \in \cF$ such that $P(s,\delta,s')=P(s,\delta',s')>0$. Then, $s'\in T(s,\delta)$ by Definition~\ref{canonical P}. Therefore, set $T(s,\delta)$ is not empty.

\end{proof}

\begin{lemma}\label{sum is one}
For each state $s \in S$ and each complete action profile $\delta \in D^A$, $$\sum_{s'\in S}P(s,\delta,s') = 1.$$
\end{lemma}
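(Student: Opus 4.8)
The plan is to split into two cases according to whether $s$ is a failure state or a non-failure state, since the definition of $P$ (Definition~\ref{canonical P}) is given piecewise along exactly this dividing line.

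\textbf{Case $s = f$.} Here the only state $s'$ with $P(s,\delta,s') \neq 0$ is $s' = f$ itself, and $P(f,\delta,f) = 1$ by the third clause of Definition~\ref{canonical P}. So $\sum_{s'\in S}P(s,\delta,s') = 1$ immediately.

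\textbf{Case $s \in \cF$.} I would write the sum as
$$
\sum_{s'\in S}P(s,\delta,s') \;=\; \sum_{s'\in T(s,\delta)}P(s,\delta,s') \;+\; P(s,\delta,f) \;+\!\!\! \sum_{s'\in \cF\setminus T(s,\delta)}\!\!\! P(s,\delta,s'),
$$
using that $S = \cF \cup \{f\}$ and $T(s,\delta)\subseteq\cF$. The last sum is $0$ by the "otherwise" clause of Definition~\ref{canonical P} (states in $\cF\setminus T(s,\delta)$ are neither in $T(s,\delta)$ nor equal to $f$), and $P(s,\delta,f) = 1-\mu(s,\delta)$ by the second clause. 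For the first sum I would further split on whether $T(s,\delta)$ is empty. If $T(s,\delta)\neq\varnothing$, then each of its $|T(s,\delta)|$ terms equals $\mu(s,\delta)/|T(s,\delta)|$, so the sum is exactly $\mu(s,\delta)$, giving a total of $\mu(s,\delta) + (1-\mu(s,\delta)) = 1$. If $T(s,\delta) = \varnothing$, then the first sum is $0$, but by Lemma~\ref{mu = 0} we have $\mu(s,\delta) = 0$, so the total is $0 + (1 - 0) = 1$.

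The only subtlety — and the reason Lemma~\ref{mu = 0} was proved first — is the empty-$T$ subcase: without it the bookkeeping would leave $1 - \mu(s,\delta)$ rather than $1$, so the argument genuinely needs the fact that an empty transition target forces $\mu(s,\delta)=0$. Everything else is routine splitting of a finite sum against the four clauses of Definition~\ref{canonical P}, so I expect no real obstacle beyond invoking Lemma~\ref{mu = 0} at the right moment.
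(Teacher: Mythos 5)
Your proof is correct and follows essentially the same route as the paper's: the same three-way case analysis (failure state, non-failure state with empty $T(s,\delta)$, non-failure state with nonempty $T(s,\delta)$), the same decomposition of the sum against the clauses of Definition~\ref{canonical P}, and the same invocation of Lemma~\ref{mu = 0} in the empty-$T$ subcase. No issues.
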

\begin{proof} We consider the following three cases separately.

\noindent{\bf Case I}: $s = f$. Thus, by Definition~\ref{canonical P}, 
$$\sum_{s'\in S}P(s,\delta,s') = \sum_{s'\in S}P(f,\delta,s')= P(f,\delta,f)+ \sum_{s'\in \cF}P(f,\delta,s') = 1 + \sum_{s'\in \cF}0 = 1.$$

\noindent{\bf Case II}: $s \in \cF$ and $T(s, \delta) = \varnothing$. Hence, $\mu(s, \delta) = 0$ by Lemma~\ref{mu = 0}. Then, by Definition~\ref{canonical P}, 
\begin{eqnarray*}
\sum_{s'\in S}P(s,\delta,s') &=& P(s,\delta,f) + \sum_{s' \in T(s, \delta)}P(s,\delta,s') + \sum_{s' \in \cF \setminus T(s, \delta)}P(s,\delta,s')\\ 
&=&  1 - \mu(s, \delta) + \sum_{s' \in \varnothing}P(s,\delta,s') + \sum_{s' \in \cF \setminus T(s, \delta)}0 = 1.
\end{eqnarray*}

\noindent{\bf Case III}: $s \in \cF$ and $T(s, \delta) \neq \varnothing$. By Definition~\ref{canonical P}, 
\begin{eqnarray*}
\sum_{s'\in S}P(s,\delta,s') &=& P(s,\delta,f) + \sum_{s' \in T(s, \delta)}P(s,\delta,s') + \sum_{s' \in \cF \setminus T(s, \delta)}P(s,\delta,s')\\ 
&=&  1 - \mu(s, \delta) + \sum_{s' \in T(s, \delta)}\dfrac{\mu(s, \delta)}{|T(s, \delta)|} + \sum_{s' \in \cF \setminus T(s, \delta)}0\\
&=& 1 - \mu(s, \delta) + \mu(s, \delta) + 0 = 1.
\end{eqnarray*}

\end{proof}


The following lemma shows that if $[C]_p\phi\in s$, then in state $s$ coalition $C$ has a strategy which guarantees that the game transitions to a non-failure state with probability at least $p$ and $\phi$ will be true in that state.

\begin{lemma}\label{s' all}
For any state $s\in \cF$ and any formula $[C]_p\phi\in s$, there is an action profile $\delta \in D^C$ such that for any complete action profile $\delta'$ and any state $s'\in \cF$, if $\delta \subseteq \delta'$ and $P(s,\delta', s') > 0$, then $\phi \in s'$.
\end{lemma}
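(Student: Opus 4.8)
The plan is to use the obvious strategy: have every member of coalition $C$ submit the request that $\phi$ hold with probability $p$. Formally, I would define the action profile $\delta\in D^C$ by setting $\delta(a)=(\phi,p)$ for every $a\in C$; when $C=\varnothing$ this is simply the unique empty function in $D^\varnothing$. The claim is then that this $\delta$ witnesses the statement of the lemma.

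To verify it, I would fix an arbitrary complete action profile $\delta'$ with $\delta\subseteq\delta'$ and an arbitrary state $s'\in\cF$ with $P(s,\delta',s')>0$, and first argue that $s'\in T(s,\delta')$. This is immediate from Definition~\ref{canonical P}: since $s\in\cF$ and $s'\neq f$ (because $s'\in\cF$), the only clause of that definition that can assign a positive value to $P(s,\delta',s')$ is the first one, and it applies precisely when $s'\in T(s,\delta')$.

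Then I would invoke Definition~\ref{canonical T}, according to which $s'\in T(s,\delta')$ implies
$$
\{\psi\mid [B]_q\psi\in s,\ \forall a\in B\,(\delta'(a)=(\psi,q))\}\subseteq s'.
$$
Since $[C]_p\phi\in s$ by hypothesis and $\delta'(a)=\delta(a)=(\phi,p)$ for every $a\in C$ (using $\delta\subseteq\delta'$), the formula $\phi$ belongs to the set on the left-hand side; note this holds vacuously in the degenerate case $C=\varnothing$ as well. Hence $\phi\in s'$, which is exactly what was to be shown.

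I do not expect a genuine obstacle here: the lemma is essentially a direct unwinding of the definitions of $P$ and $T$ in the canonical game, and it is the natural dual of Lemma~\ref{s' exists}. The only points needing a moment of care are the degenerate empty-coalition case (absorbed by the vacuous quantifier) and the observation that, by Definition~\ref{canonical P}, a positive transition probability from a non-failure state into another non-failure state can arise only through membership in $T(s,\delta')$.
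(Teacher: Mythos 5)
Your proposal is correct and follows essentially the same route as the paper's own proof: choose $\delta(a)=(\phi,p)$ for all $a\in C$, observe that $P(s,\delta',s')>0$ with $s,s'\in\cF$ forces $s'\in T(s,\delta')$ by Definition~\ref{canonical P}, and then conclude $\phi\in s'$ from Definition~\ref{canonical T}. The extra remark about the vacuous empty-coalition case is a harmless (and accurate) addition.
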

\begin{proof}
Consider any state $s\in \cF$ and any formula $[C]_p\phi$. Let action profile $\delta\in D^C$ be defined as following: $\delta(a)=(\phi,p)$ for each agent $a\in C$. 

Let $s'\in \cF$ be a state and $\delta'\in D^A$ be a complete action profile such that $\delta \subseteq \delta'$ and $P(s,\delta', s') > 0$. Note that $\delta'(a)=\delta(a)=(\phi,p)$ for each agent $a\in C$ by the choice of action profile $\delta$. 

At the same time, $s' \in T(s, \delta')$ because $P(s,\delta', s') > 0$ by Definition~\ref{canonical P}. Therefore, $\phi\in s'$ by Definition~\ref{canonical T} because $[C]_p\phi \in s$ and $\delta'(a)=(\phi,p)$ for each agent $a\in C$.
\end{proof}

The next lemma is the standard induction lemma in the proof of completeness. It brings together the results established in Lemma~\ref{s' exists} and Lemma~\ref{s' all}.

\begin{lemma}\label{main induction}
$\phi\in s$ iff $s\Vdash \phi$ for any formula $\phi\in \Psi$ and any maximal consistent set $s\in \cF$.
\end{lemma}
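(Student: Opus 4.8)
The plan is to prove the equivalence $\phi\in s \iff s\Vdash\phi$ by induction on the structural complexity of $\phi\in\Psi$, following the standard truth-lemma template. The propositional cases are routine: for a propositional variable $v$, the claim $v\in s \iff s\in\pi(v)$ is immediate from Definition~\ref{canonical pi}; for $\neg\phi$ and $\phi\to\psi$, the equivalence follows from the maximality and consistency of $s$ together with the induction hypothesis in the usual way (using that $s$ is a maximal consistent subset of the subformula-closed set $\Sigma$, so the relevant subformulae all lie in $\Sigma$). The only substantial case is $\phi$ of the form $[C]_p\psi$, and this is where the two preceding lemmas do the real work.

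For the forward direction of the modal case, I would assume $[C]_p\psi\in s$ and produce a witnessing action profile $\delta\in D^C$ for $s\Vdash[C]_p\psi$. The natural choice is $\delta(a)=(\psi,p)$ for every $a\in C$, exactly as in Lemma~\ref{s' all}. I must check both clauses of Definition~\ref{sat}(4) for every complete extension $\delta'\supseteq\delta$. Clause (b), that $P(s,\delta',t)>0$ implies $t\Vdash\psi$ for $t\in\cF$, follows from Lemma~\ref{s' all} (which gives $\psi\in t$) together with the induction hypothesis. Clause (a), that $\sum_{t\in\cF}P(s,\delta',t)\ge p$, follows because with this $\delta'$ the pair $(C,p,\psi)$ is one of the competitors in the maximum defining $\mu(s,\delta')$, so $\mu(s,\delta')\ge p$; and by Lemma~\ref{sum is one} together with the formula for $P(s,\delta',f)=1-\mu(s,\delta')$ in Definition~\ref{canonical P}, the non-failure mass equals $\mu(s,\delta')$ whenever $T(s,\delta')\ne\varnothing$ (and when $T(s,\delta')=\varnothing$, Lemma~\ref{mu = 0} forces $\mu(s,\delta')=0$, so $p\le\mu(s,\delta')=0$ and clause (a) is trivially $0\ge 0$). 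One should also handle the degenerate case $C=\varnothing$, where $\delta$ is the empty function and the same counting argument applies.

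For the backward direction I would argue by contraposition: assume $[C]_p\psi\notin s$, so $\neg[C]_p\psi\in s$ by maximality (noting $\neg[C]_p\psi\in\Sigma$ by the closure conditions on $\Sigma$), and show $s\nVdash[C]_p\psi$. Let $\delta\in D^C$ be an arbitrary candidate witness; I must refute it. Apply Lemma~\ref{s' exists} to $s$, $\neg[C]_p\psi\in s$, and $\delta$ to obtain a complete extension $\delta'\supseteq\delta$ satisfying either $\mu(s,\delta')<p$ or the existence of $s'\in\cF$ with $P(s,\delta',s')>0$ and $\neg\psi\in s'$. In the first case, by the counting argument above the non-failure mass $\sum_{t\in\cF}P(s,\delta',t)$ equals $\mu(s,\delta')<p$, violating clause (a). In the second case, the induction hypothesis gives $s'\nVdash\psi$ while $P(s,\delta',s')>0$, violating clause (b). Either way $\delta$ fails to witness $[C]_p\psi$, and since $\delta$ was arbitrary, $s\nVdash[C]_p\psi$.

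The main obstacle is bookkeeping around the relationship between $\mu(s,\delta')$ and the actual non-failure transition mass $\sum_{t\in\cF}P(s,\delta',t)$: by Lemma~\ref{P le mu} this mass is always $\le\mu(s,\delta')$, but for clause (a) of the forward direction I need the exact value $\mu(s,\delta')$, which requires reading off $P(s,\delta',f)=1-\mu(s,\delta')$ from Definition~\ref{canonical P} and invoking Lemma~\ref{sum is one}; care is needed to treat the $T(s,\delta')=\varnothing$ subcase via Lemma~\ref{mu = 0}. A secondary subtlety is making sure all the subformulae and their negations that the argument manipulates actually lie in $\Sigma$, so that maximality of $s$ within $\Sigma$ can be applied; this is exactly what the closure hypotheses (a) and (b) on $\Sigma$ were introduced to guarantee. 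Beyond that, the proof is a routine assembly of Lemma~\ref{s' exists}, Lemma~\ref{s' all}, Lemma~\ref{mu = 0}, and Lemma~\ref{sum is one}.
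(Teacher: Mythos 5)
Your proof is correct and follows essentially the same route as the paper: structural induction, with Lemma~\ref{s' all} plus the induction hypothesis handling the forward modal case and Lemma~\ref{s' exists} plus Lemma~\ref{P le mu} refuting any candidate strategy in the backward case. In fact you are slightly more careful than the paper's own write-up, which in the $(\Rightarrow)$ direction does not explicitly verify clause (a) of Definition~\ref{sat}(4); your observation that $p$ is one of the competitors in the maximum defining $\mu(s,\delta')$, combined with Definition~\ref{canonical P} and Lemma~\ref{mu = 0}, is exactly the missing bookkeeping needed to establish $\sum_{t\in\cF}P(s,\delta',t)\ge p$.
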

\begin{proof}
We prove the lemma by structural induction on formula $\phi$. The case when formula $\phi$ is a propositional variable follows from Definition~\ref{canonical pi} and Definition~\ref{sat}. The case when formula $\phi$ is a negation or an implication follows from Definition~\ref{sat} and the maximality and the consistency of set $s$ in the standard way. Let us now suppose that formula $\phi$ has the form $[C]_p\psi$.

\noindent$(\Rightarrow):$ Suppose that $[C]_p\psi\in s$. Thus, by Lemma~\ref{s' all}, there is an action profile $\delta \in D^C$ such that for any complete action profile $\delta'$ and any state $s'\in \cF$, if $\delta \subseteq \delta'$ and $P(s,\delta', s') > 0$, then $\psi \in s'$. Note that statement $\psi \in s'$ is equivalent to $s'\Vdash \psi$ by the induction hypothesis. Thus, for any complete action profile $\delta'$ and any state $s'\in \cF$, if $\delta \subseteq \delta'$ and $P(s,\delta', s') > 0$, then $s'\Vdash \psi$. Therefore, $s\Vdash [C]_p\psi$ by Definition~\ref{sat}.

\noindent$(\Leftarrow):$ Suppose that $s\Vdash [C]_p\psi$. Thus, by Definition~\ref{sat}, there is an action profile $\delta\in D^C$ such that for any complete action profile $\delta'\in D^A$ if $\delta\subseteq \delta'$, then
     \begin{enumerate}
        \item $\sum_{t\in \cF}P(s,\delta',t)\ge p$,
        \item if $P(s,\delta',s')>0$, then $s'\Vdash \psi$, for each $s'\in \cF$.
    \end{enumerate}
Thus, by Lemma~\ref{P le mu}, 
\begin{equation}\label{mu p}
    \mu(s, \delta) \ge p.
\end{equation}

Assume that $[C]_p\psi\notin s$. Thus, $\neg[C]_p\psi\in s$ due to the maximality of set $s$. Hence, by Lemma~\ref{s' exists} there is $\delta'\in D^A$ such that $\delta\subseteq\delta'$ and one of the following is true:
\begin{enumerate}
\setcounter{enumi}{2}
    \item  $\mu(s, \delta') < p$,
    \item there is an $s' \in \cF$ such that $P(s,\delta',s')>0$ and $\neg\phi \in s'$.
\end{enumerate}
Note that statement 3 can not be true due to inequality~(\ref{mu p}). Thus, there is $s'\in \cF$ where $P(s,\delta',s') > 0$ and $\neg\psi\in s'$. Hence, $\psi\notin s$ due to the consistency of set $s'$. Thus, $s'\nVdash\psi$ by the induction hypothesis, which contradicts to statement 2 above.
\end{proof}

We are now ready to state and to prove the (weak) completeness for system $\mathcal{L}$ and strong completeness for system $\mathcal{L}^+$.

\begin{theorem}\label{weak completeness L theorem}
If $\phi\in \Phi$ and $\nvdash_{\mathcal{L}}\phi$, then there is a state $s$ of a stochastic game such that $s\nVdash\phi$.
\end{theorem}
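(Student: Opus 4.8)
The plan is to instantiate the canonical stochastic game construction developed in this section on a suitable \emph{finite} fragment $\Sigma$ and then simply read off the conclusion from Lemma~\ref{main induction}.

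First I would note that the hypothesis $\nvdash_{\mathcal{L}}\phi$ makes the singleton $\{\neg\phi\}$ consistent: if $\neg\phi\vdash_{\mathcal{L}}\bot$, then $\vdash_{\mathcal{L}}\neg\phi\to\bot$ by the Deduction Lemma (Lemma~\ref{deduction lemma}), hence $\vdash_{\mathcal{L}}\phi$ by propositional reasoning, contradicting the assumption. Next I would fix a finite set $\Sigma\subseteq\Phi$ that contains both $\phi$ and $\neg\phi$, is closed under subformulae, and satisfies condition (b) from the start of this section; such a $\Sigma$ exists, for example the set obtained from the finitely many subformulae of $\phi$ by adjoining $\neg\sigma$ for every non-negated $\sigma$ among them, together with $\neg\phi$ itself. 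Since $\Sigma$ is finite, a greedy extension argument (a version of Lindenbaum's Lemma, Lemma~\ref{Lindenbaum's lemma}, relativized to $\Sigma$) extends the consistent set $\{\neg\phi\}$ to a maximal consistent subset $s$ of $\Sigma$; by consistency of $s$ we have $\phi\notin s$.

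Because $\Psi=\Phi$ and $\Sigma$ is finite, all the ingredients needed to define the canonical game $G(\Phi,\Sigma)=(S,F,D,P,\pi)$ are available: Lemma~\ref{P limits L} guarantees that $\mu(s,\delta)$ is well-defined and lies in $[0,1]$, and Lemma~\ref{sum is one} gives $\sum_{s'\in S}P(s,\delta,s')=1$, so $G(\Phi,\Sigma)$ is a genuine stochastic game in the sense of Definition~\ref{transition system}, and the maximal consistent set $s$ is one of its non-failure states, $s\in\cF$. Applying Lemma~\ref{main induction} to the formula $\phi$ and the state $s$ yields $\phi\in s$ iff $s\Vdash\phi$; since $\phi\notin s$, we conclude $s\nVdash\phi$, which is exactly the assertion of the theorem.

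I do not anticipate a serious obstacle, since the substantive work has already been carried out in Lemma~\ref{s' exists}, Lemma~\ref{s' all}, and their consolidation into Lemma~\ref{main induction}. The only point demanding a little care is the choice of $\Sigma$: it must be kept finite so that Lemma~\ref{P limits L}, and hence the entire canonical construction, remains valid in the language $\Phi$ with real-valued subscripts (this is precisely why weak, rather than strong, completeness is obtained for $\mathcal{L}$), yet it must still be large enough --- closed under subformulae and under the relevant negations, and containing $\neg\phi$ --- for the maximality/consistency bookkeeping inside Lemma~\ref{main induction} to go through.
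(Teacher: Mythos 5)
Your proposal is correct and follows essentially the same route as the paper's own proof: choose a finite $\Sigma$ containing $\neg\phi$ that is closed under subformulae and the relevant negations, extend the consistent set $\{\neg\phi\}$ to a maximal consistent subset $s$ of $\Sigma$ via Lindenbaum's Lemma, and conclude $s\nVdash\phi$ from Lemma~\ref{main induction} applied in the canonical game $G(\Phi,\Sigma)$. The extra remarks on why $\{\neg\phi\}$ is consistent and why $\Sigma$ must be finite are accurate elaborations of steps the paper leaves implicit.
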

\begin{proof}
Let $\Sigma\subseteq\Phi$ be any {\em finite} set of formulae such that (a) $\neg\phi\in\Sigma$; (b) $\Sigma$ is closed with respect to subformulae; (c) if $\sigma\in\Sigma$, then $\neg\sigma\in\Sigma$, unless formula $\sigma$ itself is a negation. Consider canonical stochastic game $G(\Phi,\Sigma)=(S,F,D,P,\pi)$.

By Lemma~\ref{Lindenbaum's lemma}, assumption $\nvdash_{\mathcal{L}}\phi$ implies that there is a maximal consistent set $s\subseteq\Sigma$ such that $\neg\phi\in s$. Note that $s\in S$ by Definition~\ref{canonical state} and $\phi\notin s$ because set $s$ is consistent. Therefore, $s\nVdash\phi$ by Lemma~\ref{main induction}.

\end{proof}

\begin{theorem}\label{strong completeness L+ theorem}
If $X\subseteq \Phi^+$, $\phi\in\Phi^+$, and $X\nvdash_{\mathcal{L}^+}\phi$, then there is a state $s$ of a stochastic game such that $s\Vdash\chi$ for each $\chi\in X$ and $s\nVdash\phi$.
\end{theorem}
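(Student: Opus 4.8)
The plan is to mimic the proof of Theorem~\ref{weak completeness L theorem}, with one essential modification: since the hypothesis set $X$ may be infinite and we need a \emph{single} state satisfying every formula of $X$ simultaneously, we can no longer cut down to a finite subformula-closed $\Sigma$. Instead, I would take $\Sigma$ to be the full language $\Phi^+$ and build the canonical stochastic game $G(\Phi^+,\Phi^+)=(S,F,D,P,\pi)$. This is legitimate precisely because $\Psi=\Phi^+$: Lemma~\ref{P limits L+} guarantees that $\mu(s,\delta)$ is well-defined and lies in $[0,1]$ for every non-failure state $s$ and every complete action profile $\delta$, even though the states are now infinite maximal consistent sets. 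Consequently all the subsequent lemmas (Lemma~\ref{P le mu} through Lemma~\ref{main induction}), which were stated for ``$\Sigma$ finite or $\Psi=\Phi^+$'', remain available.

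First I would check that $X\cup\{\neg\phi\}$ is consistent. If it were not, then $X,\neg\phi\vdash_{\mathcal{L}^+}\bot$, so $X\vdash_{\mathcal{L}^+}\neg\phi\to\bot$ by the deduction lemma (Lemma~\ref{deduction lemma}); since $(\neg\phi\to\bot)\to\phi$ is a propositional tautology, Modus Ponens gives $X\vdash_{\mathcal{L}^+}\phi$, contradicting the hypothesis $X\nvdash_{\mathcal{L}^+}\phi$. By Lindenbaum's lemma (Lemma~\ref{Lindenbaum's lemma}), extend $X\cup\{\neg\phi\}$ to a maximal consistent set $s\subseteq\Phi^+$.

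Next I would locate $s$ inside the canonical game. By Definition~\ref{canonical state}, $s\in S$; since $s$ is a maximal consistent subset of $\Sigma$ and not the distinguished failure state $f$, Definition~\ref{canonical failure} gives $s\in\cF$. Now apply Lemma~\ref{main induction}: for every $\chi\in\Phi^+$ we have $\chi\in s$ iff $s\Vdash\chi$. Because $X\subseteq s$, this yields $s\Vdash\chi$ for each $\chi\in X$. Because $\neg\phi\in s$ and $s$ is consistent, $\phi\notin s$, hence $s\nVdash\phi$. This produces the required state of a stochastic game.

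The only genuine obstacle — and the reason $\mathcal{L}^+$ enjoys \emph{strong} completeness whereas $\mathcal{L}$ does not — is confirming that the canonical construction survives the passage to the infinite index set $\Sigma=\Phi^+$, i.e.\ that $\mu(s,\delta)$ is still well-defined when $s$ is infinite. This is exactly what Lemma~\ref{P limits L+} supplies: every coalition occurring in a formula of $\Phi^+$ is nonempty and $A$ is finite, so only finitely many reals $p$ can arise as $\delta(a)=(\phi,p)$ with $a$ ranging over a nonempty coalition. Everything else is routine bookkeeping already discharged by the preceding lemmas.
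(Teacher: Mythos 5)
Your proof is correct and follows essentially the same route as the paper's: build the canonical game $G(\Phi^+,\Phi^+)$, observe $X\cup\{\neg\phi\}$ is consistent, extend by Lindenbaum's lemma, and conclude via Lemma~\ref{main induction}. Your added remarks on why Lemma~\ref{P limits L+} licenses taking $\Sigma=\Phi^+$ are accurate and consistent with the paper's setup.
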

\begin{proof}
Let $(S,F,D,P,\pi)$ be the canonical stochastic game $G(\Phi^+,\Phi^+)$. Suppose that $X \nvdash_{\mathcal{L}^+}\phi$. Hence, set $X\cup \{\neg\phi\}$ is consistent in ${\mathcal{L}^+}$. By Lemma~\ref{Lindenbaum's lemma}, there is a maximal consistent in ${\mathcal{L}^+}$ extension $s\subseteq\Phi^+$ of set $X\cup \{\neg\phi\}$. Then, $s\in S$ by Definition~\ref{canonical state}. Note that $\phi\notin s$ due to the consistency of set $s$. Also, $\chi\in s$ for each $\chi\in X$ because $X\subseteq s$.  Therefore, $s\Vdash\chi$ for each $\chi\in X$ and $s\nVdash\phi$ by Lemma~\ref{main induction}.
\end{proof}

\section{Incompleteness}\label{incompleteness section}

In this section we show that no logical system in language $\Phi$ is strongly complete with respect to the semantics of stochastic games. This result is formally stated as Theorem~\ref{incompleteness theorem}. 

\begin{definition}\label{entails}
$X$ semantically entails $\phi$, written as
$X\vDash\phi$, when for any state $s$ of any stochastic game, if $s\Vdash\chi$ for each $\chi\in X$, then $s\Vdash\phi$.
\end{definition}

\begin{definition}\label{strongly sound}
A logical system $\mathcal{S}$ is {\em strongly sound} with respect to stochastic games if $X\vdash_{\mathcal{S}}\phi$ implies $X\vDash\phi$ for each set of formulae $X$ and each formula $\phi$.
\end{definition}

\begin{definition}\label{strongly complete}
A logical system $\mathcal{S}$ is {\em strongly complete} with respect to stochastic games if $X\vDash\phi$ implies $X\vdash_{\mathcal{S}}\phi$ for each set of formulae $X$ and each formula $\phi$.
\end{definition}

\begin{theorem}[incompleteness]\label{incompleteness theorem}
Any strongly sound with respect to stochastic games logical system in language $\Phi$ is not strongly complete. 
\end{theorem}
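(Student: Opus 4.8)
The plan is to exhibit a set $X\subseteq\Phi$ and a formula $\phi\in\Phi$ such that $X\vDash\phi$ but $\phi$ is not provable from any \emph{finite} subset of $X$ in any strongly sound system, and then invoke compactness of syntactic derivability (since $X\vdash_{\mathcal S}\phi$ always uses only finitely many premises). The natural candidate exploits the empty coalition: let $X=\{\,[\varnothing]_p\top \;:\; 0\le p<1\,\}$ and $\phi=[\varnothing]_1\top$. Semantically, if a state $s$ satisfies $[\varnothing]_p\top$ for every $p<1$, then for the (unique, since $\varnothing$ has only the empty action profile) behavior of the system we have $\sum_{t\in\cF}P(s,\delta',t)\ge p$ for all complete profiles $\delta'$ and all $p<1$, hence the infimum gives $\sum_{t\in\cF}P(s,\delta',t)\ge 1$, i.e. $s\Vdash[\varnothing]_1\top$. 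So $X\vDash\phi$.

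Next I would show that no \emph{finite} subset of $X$ entails $\phi$, which by strong soundness implies no finite subset proves $\phi$, hence (by finiteness of proofs) $X\nvdash_{\mathcal S}\phi$, contradicting strong completeness. Given a finite $X_0\subseteq X$, let $p^*=\max\{p : [\varnothing]_p\top\in X_0\}<1$ (taking $p^*=0$ if $X_0=\varnothing$). I would build a one-step stochastic game with a non-failure root $s_0$, a single non-failure successor, and the failure state $f$, with $P(s_0,\delta,\cdot)$ putting mass $p^*$ on the non-failure successor and $1-p^*$ on $f$ regardless of $\delta$. Then $s_0\Vdash[\varnothing]_p\top$ exactly when $p\le p^*$, so $s_0\Vdash\chi$ for every $\chi\in X_0$, but $s_0\nVdash[\varnothing]_1\top$ since $p^*<1$. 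This witnesses $X_0\nvDash\phi$.

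To finish, I would spell out the compactness argument explicitly: suppose $\mathcal S$ is strongly sound and strongly complete in $\Phi$. Since $X\vDash\phi$, strong completeness gives $X\vdash_{\mathcal S}\phi$; by the definition of $X\vdash_{\mathcal S}\phi$ (a derivation is a finite sequence), only finitely many formulae of $X$ are used, so $X_0\vdash_{\mathcal S}\phi$ for some finite $X_0\subseteq X$; then strong soundness yields $X_0\vDash\phi$, contradicting the previous paragraph. Hence no such $\mathcal S$ exists.

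The main obstacle is making sure the witnessing games are genuinely legal stochastic games under Definition~\ref{transition system} (probabilities summing to $1$, $F$ and $\cF$ handled correctly, the propositional valuation irrelevant since only $\top$ appears) and, more subtly, that the empty coalition really has only the empty action profile, so that clause~4 of Definition~\ref{sat} quantifies over \emph{all} complete profiles $\delta'$ — this is precisely why unavoidability is expressible and why the example works. A secondary point to state carefully is that $X$ must be a set of formulae with real subscripts ranging over all $p<1$, so $X$ is infinite (indeed uncountable), which is exactly what defeats compactness; everything else is routine.
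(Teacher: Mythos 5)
Your proposal is correct and follows essentially the same route as the paper: the paper uses the countable set $\{[\varnothing]_{1-10^{-n}}\top \;|\; n\ge 0\}$ where you use all $p<1$, shows semantic entailment of $[\varnothing]_1\top$ by the same supremum argument, refutes every finite subset with the same three-state game (root, non-failure sink, failure sink with self-loops), and closes with the identical finiteness-of-derivations plus strong-soundness contradiction. No gaps worth noting.
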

\begin{proof}
Suppose that a logical system $\mathcal{S}$ in language $\Phi$ is strongly sound and strongly complete with respect to stochastic games. Consider following infinite subset of $\Phi$:
$$
X=\{[\varnothing]_{1-10^{-n}}\top\;|\; n\ge 0\}=\{[\varnothing]_0\top,[\varnothing]_{0.9}\top, [\varnothing]_{0.99}\top,\dots\}.
$$
\begin{claim}
$X\vDash [\varnothing]_1\top$.
\end{claim}
\begin{proof-of-claim}
Suppose that $s\Vdash [\varnothing]_{1-10^{-n}}\top$ for each $n\ge 0$ for some state $s\in S$ of a stochastic game $(S,F,D,P,\pi)$. Thus, by Definition~\ref{sat}, for each $n\ge 0$ there is an action profile $\delta_n\in D^\varnothing$ such that for each complete action profile $\delta'\in D^A$, if $\delta_n\subseteq\delta'$, then
$$
\sum_{t\in \cF}P(s,\delta',t)\ge 1-10^{-n}.
$$
Since $\delta_n$ is an action profile of an empty coalition, statement $\delta_n\subseteq\delta'$ is vacuously true for any complete action profile $\delta'$. Hence,
for each $n\ge 0$ and each complete action profile $\delta'\in D^A$, 
$$
\sum_{t\in \cF}P(s,\delta',t)\ge 1-10^{-n}.
$$
At the same time,
$$
\sum_{t\in \cF}P(s,\delta',t)\le \sum_{t\in S}P(s,\delta',t) = 1
$$
by Definition~\ref{transition system}. Thus, for each complete action profile $\delta'\in D^A$, 
$$
\sum_{t\in \cF}P(s,\delta',t)=1.
$$
Therefore, 
$s\Vdash [\varnothing]_1\top$ by Definition~\ref{sat}.
\end{proof-of-claim}

The claim above together with the assumption that logical system $\mathcal{S}$ is strongly complete imply that $X\vdash_{\mathcal{S}}[\varnothing]_1\top$ by Definition~\ref{strongly complete}.  Since any derivation can use only finitely many assumptions, there must exist $N\ge 0$ such that
$$
[\varnothing]_0\top,[\varnothing]_{0.9}\top, [\varnothing]_{0.99}\top,\dots,[\varnothing]_{1-10^{-N}}\vdash [\varnothing]_1\top.
$$
Hence, by Definition~\ref{strongly sound} and the assumption that system $\mathcal{S}$ is strongly sound,
\begin{equation}\label{long formula}
[\varnothing]_0\top,[\varnothing]_{0.9}\top, [\varnothing]_{0.99}\top,\dots,[\varnothing]_{1-10^{-N}}\vDash [\varnothing]_1\top.   
\end{equation}

\begin{figure}[ht]
\begin{center}
\scalebox{0.75}{\includegraphics{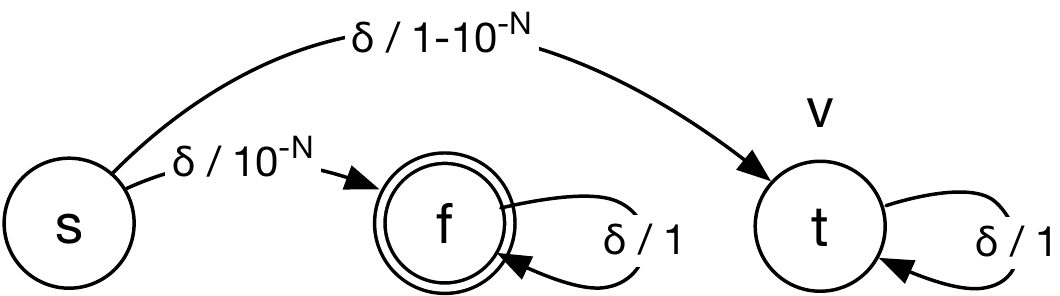}}
\caption{A Stochastic Game.}\label{incompleteness figure}
\end{center}
\end{figure}

Consider now a stochastic game depicted in Figure~\ref{incompleteness figure}. This game has three states: $s$, $f$, and $t$. The domain of actions of the game consists of a single element. Thus, the game has only one complete action profile that we refer to as $\delta$. From state $s$ the game transitions into state $t$ with probability $1-10^{-N}$ and it transitions to failure state $f$ with probability $10^{-N}$. From state $t$ the game always transitions back to state $t$. Additionally, from failure state $f$ the game always transitions back to failure state $f$. Thus,  $s\Vdash[\varnothing]_{1-10^{-n}}\top$ for each $n\le N$ and $s\nVdash [\varnothing]_1\top$ by Definition~\ref{sat}. The last statement, by Definition~\ref{entails}, contradicts statement~(\ref{long formula}).
\end{proof}

\section{Decidability}\label{decidability section}

The languages $\Phi$ and $\Phi^+$ are not countable because we consider modalities labeled by real numbers in the interval $[0,1]$. The languages will become countable if we restrict labels to rational numbers in the same interval. In this case, the set of theorems of logical system $\mathcal{L}$ is decidable because proof of Theorem~\ref{weak completeness L theorem} establishes completeness of our system with respect to the class of {\em finite} stochastic games. Then, system $\mathcal{L}^+$ is also decidable because its theorems are exactly the theorems of $\mathcal{L}$ restricted to language $\Phi^+$.

\section{Conclusion}\label{conclusion section}

In this article we have proposed a notion of a stochastic game with failure states and a logic of coalition power with modality that incorporates probability of non-failure. It has turned out that the logical properties of this modality significantly depend on whether the language allows empty coalitions. We have proven a strong completeness theorem for the language without the empty coalition and a weak completeness theorem for the language with the empty coalition. We have also shown that if the language includes the empty coalition, then no strongly sound and strongly complete axiomatization of the logic exists. Finally, we have observed the decidability of the logical systems discussed in the article.



\begin{thebibliography}{10}
\expandafter\ifx\csname url\endcsname\relax
  \def\url#1{\texttt{#1}}\fi
\expandafter\ifx\csname urlprefix\endcsname\relax\def\urlprefix{URL }\fi
\expandafter\ifx\csname href\endcsname\relax
  \def\href#1#2{#2} \def\path#1{#1}\fi

\bibitem{p01illc}
M.~Pauly, Logic for social software, Ph.D. thesis, Institute for Logic,
  Language, and Computation (2001).

\bibitem{p02}
M.~Pauly, A modal logic for coalitional power in games, Journal of Logic and
  Computation 12~(1) (2002) 149--166.
\newblock \href {https://doi.org/10.1093/logcom/12.1.149}
  {\path{doi:10.1093/logcom/12.1.149}}.

\bibitem{g01tark}
V.~Goranko, Coalition games and alternating temporal logics, in: Proceedings of
  the 8th conference on Theoretical aspects of rationality and knowledge,
  Morgan Kaufmann Publishers Inc., 2001, pp. 259--272.

\bibitem{vw05ai}
W.~van~der Hoek, M.~Wooldridge, On the logic of cooperation and propositional
  control, Artificial Intelligence 164~(1) (2005) 81 -- 119.

\bibitem{b07ijcai}
S.~Borgo, Coalitions in action logic, in: 20th International Joint Conference
  on Artificial Intelligence, 2007, pp. 1822--1827.

\bibitem{sgvw06aamas}
L.~Sauro, J.~Gerbrandy, W.~van~der Hoek, M.~Wooldridge, Reasoning about action
  and cooperation, in: Proceedings of the Fifth International Joint Conference
  on Autonomous Agents and Multiagent Systems, AAMAS '06, ACM, New York, NY,
  USA, 2006, pp. 185--192.
\newblock \href {https://doi.org/10.1145/1160633.1160663}
  {\path{doi:10.1145/1160633.1160663}}.

\bibitem{av08aamas}
T.~{\AA}gotnes, H.~Van~Ditmarsch, Coalitions and announcements, in: Proceedings
  of the 7th international joint conference on Autonomous agents and multiagent
  systems-Volume 2, International Foundation for Autonomous Agents and
  Multiagent Systems, 2008, pp. 673--680.

\bibitem{abvs10jal}
T.~{\AA}gotnes, P.~Balbiani, H.~van Ditmarsch, P.~Seban, Group announcement
  logic, Journal of Applied Logic 8~(1) (2010) 62 -- 81.
\newblock \href {https://doi.org/10.1016/j.jal.2008.12.002}
  {\path{doi:10.1016/j.jal.2008.12.002}}.

\bibitem{avw09ai}
T.~{\AA}gotnes, W.~van~der Hoek, M.~Wooldridge, Reasoning about coalitional
  games, Artificial Intelligence 173~(1) (2009) 45 -- 79.
\newblock \href {https://doi.org/10.1016/j.artint.2008.08.004}
  {\path{doi:10.1016/j.artint.2008.08.004}}.

\bibitem{b14sr}
F.~Belardinelli, Reasoning about knowledge and strategies: Epistemic strategy
  logic, in: Proceedings 2nd International Workshop on Strategic Reasoning,
  {SR} 2014, Grenoble, France, April 5-6, 2014, Vol. 146 of {EPTCS}, 2014, pp.
  27--33.

\bibitem{ge18aamas}
V.~Goranko, S.~Enqvist, Socially friendly and group protecting coalition
  logics, in: Proceedings of the 17th International Conference on Autonomous
  Agents and Multiagent Systems, International Foundation for Autonomous Agents
  and Multiagent Systems, 2018, pp. 372--380.

\bibitem{al18aamas}
N.~Alechina, B.~Logan, Resource logics with a diminishing resource, in:
  Proceedings of the 17th International Conference on Autonomous Agents and
  MultiAgent Systems, International Foundation for Autonomous Agents and
  Multiagent Systems, 2018, pp. 1847--1849.

\bibitem{ga17tark}
R.~Galimullin, N.~Alechina, Coalition and group announcement logic, in:
  Proceedings Sixteenth Conference on Theoretical Aspects of Rationality and
  Knowledge (TARK) 2017, Liverpool, UK, 24-26 July 2017, 2017, pp. 207--220.

\bibitem{alnr11jlc}
N.~Alechina, B.~Logan, H.~N. Nguyen, A.~Rakib, Logic for coalitions with
  bounded resources, Journal of Logic and Computation 21~(6) (2011) 907--937.

\bibitem{ahk02}
R.~Alur, T.~A. Henzinger, O.~Kupferman, Alternating-time temporal logic,
  Journal of the ACM 49~(5) (2002) 672--713.
\newblock \href {https://doi.org/10.1145/585265.585270}
  {\path{doi:10.1145/585265.585270}}.

\bibitem{gd06tcs}
V.~Goranko, G.~van Drimmelen, Complete axiomatization and decidability of
  alternating-time temporal logic, Theoretical Computer Science 353~(1) (2006)
  93 -- 117.
\newblock \href {https://doi.org/10.1016/j.tcs.2005.07.043}
  {\path{doi:10.1016/j.tcs.2005.07.043}}.

\bibitem{amrz16kr}
B.~Aminof, A.~Murano, S.~Rubin, F.~Zuleger, Prompt alternating-time epistemic
  logics., KR 16 (2016) 258--267.

\bibitem{bmmrv17lics}
R.~Berthon, B.~Maubert, A.~Murano, S.~Rubin, M.~Y. Vardi, Strategy logic with
  imperfect information, in: Logic in Computer Science (LICS), 2017 32nd Annual
  ACM/IEEE Symposium on, IEEE, 2017, pp. 1--12.

\bibitem{bmm17aamas}
R.~Berthon, B.~Maubert, A.~Murano, Decidability results for atl* with imperfect
  information and perfect recall, in: Proceedings of the 16th Conference on
  Autonomous Agents and MultiAgent Systems, International Foundation for
  Autonomous Agents and Multiagent Systems, 2017, pp. 1250--1258.

\bibitem{cl07fskd}
T.~Chen, J.~Lu, Probabilistic alternating-time temporal logic and model
  checking algorithm, in: Fourth International Conference on Fuzzy Systems and
  Knowledge Discovery (FSKD 2007), Vol.~2, IEEE, 2007, pp. 35--39.

\bibitem{bj09fi}
N.~Bulling, W.~Jamroga, What agents can probably enforce, Fundamenta
  Informaticae 93~(1-3) (2009) 81--96.

\bibitem{nj11ijcai}
P.~Nov{\'a}k, W.~Jamroga, Agents, actions and goals in dynamic environments,
  in: Twenty-Second International Joint Conference on Artificial Intelligence,
  2011.

\bibitem{hsz12aaai}
X.~Huang, K.~Su, C.~Zhang, Probabilistic alternating-time temporal logic of
  incomplete information and synchronous perfect recall., in: AAAI, 2012.

\bibitem{cfkps13tools}
T.~Chen, V.~Forejt, M.~Kwiatkowska, D.~Parker, A.~Simaitis, Prism-games: A
  model checker for stochastic multi-player games, in: International Conference
  on TOOLS and Algorithms for the Construction and Analysis of Systems,
  Springer, 2013, pp. 185--191.

\bibitem{kpw18ijsttt}
M.~Kwiatkowska, D.~Parker, C.~Wiltsche, Prism-games: verification and strategy
  synthesis for stochastic multi-player games with multiple objectives,
  International Journal on Software Tools for Technology Transfer 20~(2) (2018)
  195--210.

\bibitem{bp90krdr}
N.~Belnap, M.~Perloff, Seeing to it that: A canonical form for agentives, in:
  Knowledge representation and defeasible reasoning, Springer, 1990, pp.
  167--190.

\bibitem{h01}
J.~F. Horty, Agency and deontic logic, Oxford University Press, 2001.

\bibitem{h95jpl}
J.~F. Horty, N.~Belnap, The deliberative stit: A study of action, omission,
  ability, and obligation, Journal of philosophical logic 24~(6) (1995)
  583--644.

\bibitem{hp17rsl}
J.~Horty, E.~Pacuit, Action types in stit semantics, The Review of Symbolic
  Logic (2017) 1--21.

\bibitem{ow16sl}
G.~K. Olkhovikov, H.~Wansing, Inference as doxastic agency. part i: The basics
  of justification stit logic, Studia Logica (2018) 1--28.

\bibitem{chp10ic}
K.~Chatterjee, T.~A. Henzinger, N.~Piterman, Strategy logic, Information and
  Computation 208~(6) (2010) 677--693.

\bibitem{mmpv14tocl}
F.~Mogavero, A.~Murano, G.~Perelli, M.~Y. Vardi, Reasoning about strategies: On
  the model-checking problem, ACM Transactions on Computational Logic (TOCL)
  15~(4) (2014) 34.

\bibitem{ammr18ic}
B.~Aminof, V.~Malvone, A.~Murano, S.~Rubin, Graded modalities in strategy
  logic., Inf. Comput. 261~(Part) (2018) 634--649.

\bibitem{bht07tark}
J.~Broersen, A.~Herzig, N.~Troquard, A normal simulation of coalition logic and
  an epistemic extension, in: Proceedings of the 11th conference on Theoretical
  aspects of rationality and knowledge, ACM, 2007, pp. 92--101.

\bibitem{m09}
E.~Mendelson, Introduction to mathematical logic, CRC press, 2009.

\end{thebibliography}
\end{document}